\newtheorem{proposition}{Proposition}[section]
\newtheorem{lemma}{Lemma}[section]
\newtheorem{definition}{Definition}
\title{Learning to Share in Multi-Agent RL}
\author{
	   Yuxuan Yi\textsuperscript{$\dagger$} \quad Ge Li\textsuperscript{$\dagger$} \quad Yaowei Wang\textsuperscript{$\ddagger$} \quad Zongqing Lu\textsuperscript{$\dagger$}\\ 
	   \textsuperscript{$\dagger$}Peking University \quad \textsuperscript{$\ddagger$}Peng Cheng Lab \\
	   \texttt{\{touma,geli,zongqing.lu\}@pku.edu.cn} \quad \texttt{wangyw@pcl.ac.cn}
}
\begin{document}

	\maketitle
	
	\begin{abstract}
		In this paper, we study the problem of networked multi-agent reinforcement learning (MARL), where a number of agents are deployed as a partially connected network and each interacts only with nearby agents. Networked MARL requires all agents to make decisions in a decentralized manner to optimize a global objective with restricted communication between neighbors over the network. Inspired by the fact that \textit{sharing} plays a key role in human's learning of cooperation, we propose LToS, a hierarchically decentralized MARL framework that enables agents to learn to dynamically share reward with neighbors so as to encourage agents to cooperate on the global objective
		through collectives.
		For each agent, the high-level policy learns how to share reward with neighbors to decompose the global objective, while the low-level policy learns to optimize the local objective induced by the high-level policies in the neighborhood. The two policies form a bi-level optimization and learn alternately. We empirically demonstrate that LToS outperforms existing methods in both social dilemma and networked MARL scenarios
		across scales.
	\end{abstract}
	
	\section{Introduction}
	
	In fully cooperative multi-agent reinforcement learning (MARL), 
	there are multiple agents interacting with the environment via their joint action to cooperatively optimize an objective. Many methods of centralized training and decentralized execution (CTDE) have been proposed for cooperative MARL, such as COMA \citep{foerster2018counterfactual}, QMIX \citep{rashid2018qmix}, QPLEX \citep{wang2021qplex}, and FOP \citep{zhang2021fop}. However, these methods suffer from the overgeneralization issue:
	employed value functions cannot estimate well because agents sometimes choose uncoordinated actions, and thus the optimal policy cannot be learned
	\citep{castellini2019representational}. 
	Moreover, they may not easily scale up with the number of agents due to centralized learning \citep{qu2020intention}.
	
	In many MARL applications, there are a large number of agents that are deployed as a partially connected network and collaboratively make decisions to optimize the globally averaged return, such as communication networks \citep{kim2018learning} and traffic signal control \citep{wei2019colight}. 
	To deal with such scenarios, networked MARL is formulated to decompose the dependency among all agents into dependencies between only neighbors. To avoid decision-making with insufficient information, agents are permitted to exchange messages with neighbors over the network. In such settings, it is feasible for agents to learn to make decisions in a decentralized way \citep{zhang2018fully,qu2019scalable2}. However, there are still difficulties of dependency if anyone attempts to make decisions independently, \textit{e.g., prisoner's dilemma} and \textit{tragedy of the commons} \citep{perolat2017multi}. Existing methods tackle these problems by consensus update of value function \citep{zhang2018fully}, credit assignment \citep{wang2020shapley}, or reward shaping \citep{chu2020multi}. However, these methods rely on either access to global state and joint action \citep{zhang2018fully} or hand-crafted reward functions \citep{wang2020shapley,chu2020multi}. 
	
	Inspired by the fact that \textit{sharing} plays a key role in human's learning of cooperation \citep{eisenberg1989roots}, we propose \textit{\textbf{learning to share}} (\textbf{LToS}), a hierarchically decentralized learning framework for networked MARL. LToS enables agents to learn to dynamically share reward with neighbors so as to collaboratively optimize the global objective. The high-level policies decompose the global objective into local ones by determining how to share their rewards, while the low-level policies optimize local objectives induced by the high-level policies. LToS learns in a decentralized manner, and we prove that the high-level policies are a mean-field approximation of the joint high-level policy. Moreover, the high-level and low-level policies form a bi-level optimization and alternately learn to optimize the global objective. 
	
	LToS is a general hierarchical framework for networked MARL and can be easily realized by diverse combinations of RL algorithms. We currently implement LToS by DDPG \citep{lillicrap2016continuous} as the high-level policy and DGN \citep{jiang2020graph} as the low-level policy. We empirically demonstrate that LToS outperforms existing methods for networked MARL in both social dilemma and networked MARL scenario.

	\section{Related Work}
	There are many recent studies for collaborative MARL. Most adopt CTDE
	\citep{foerster2018counterfactual,rashid2018qmix,wang2021qplex,zhang2021fop,su2022dmac}. Many of them are constructed on the basis of factorizing the joint Q-function \citep{rashid2018qmix,wang2021qplex}. However, these factorized methods suffer from the overgeneralization issue \citep{castellini2019representational}. Other studies focus more on \textit{decentralized training}, to which our work is more closely related, as summarized as follows.
	
	\textbf{Networked MARL.} 
	\citet{zhang2018fully} and \citet{qu2019value} proposed consensus update of local value functions, where each agent keeps a local copy of the global value function but is assumed to have global information. \citet{qu2020intention} proposed intention propagation between agents, where each agent updates its policy based on intentions shared by other agents, but the policy may converge slowly due to propagated intentions over the network. \citet{qu2019scalable2} and \citet{lin2020multiagent} investigated the exponential decay property, \textit{i.e.}, the impact of agents on each other decays exponentially in their graph distance, while \citet{chu2020multi} introduced a spatial discount factor to capture the influence between agents, which remains hand-tuned. However, none of these studies provide an explicit mechanism to solve social dilemma in networked MARL.

	\textbf{Reward Design.} 
	\citet{hostallero2020inducing} aimed at maximizing social welfare, but they simply used temporal difference error for reward shaping. As temporal difference error in deep RL hardly converges to zero, it still biases the optimization objective. \citet{mguni2019coordinating} added an extra part to the original reward as non-potential based reward shaping and used Bayesian optimization to induce the convergence to a desirable equilibrium between agents. However, the extra part remains fixed during an episode, which makes it less capable of dealing with dynamic environments. Moreover, the reward shaping alters the original optimization problem. \citet{hughes2018inequity} proposed the inequity aversion model to balance agents' selfish desire and social fairness. \citet{wang2020shapley} considered learning the Shapley value as the credit assignment. However, these methods still rely on hand-crafted reward designs. \citet{lupu2020gifting} added \textit{gifting} as an extra action into the original MDP to modify the MDP and objective. \citet{yang2020learning} proposed that each agent learns an incentive function and optimizes the policy in terms of both reward and incentives given by other agents. Obviously, both methods alter the original objective of optimization. 
	
	Unlike existing work, LToS enables agents to learn to dynamically share reward with other agents \textit{without the bias of the optimization objective} such that they can collaboratively optimize the global objective in networked MARL.

	\section{Background}
	
	\subsection{Networked Multi-Agent Reinforcement Learning}
	Assume $N$ agents interact with an environment. Let $\mathcal{V} = \{1, 2, \cdots, N\}$ be the set of agents. The multi-agent system is modeled as an undirected graph $\mathcal{G}\mathcal{(V, E})$, where each agent $i$ serves as vertex $i$ and $\mathcal{E} \mathcal{\subseteq V \times V}$ is the set of all edges. Two agents $i,j \in \mathcal{V}$ can communicate with each other if and only if $e_{ij} = (i,j) \in \mathcal{E}$. We denote \textit{agent $i$ and its all neighbors in the graph together as a set $\mathcal{N}_{i}$}. The state of the environment $s \in \mathcal{S}$ transitions upon joint action $\boldsymbol{a} \in \mathcal{A}$ according to transition probability $\mathcal{P}_a: \mathcal{S \times A \times S} \rightarrow [0,1]$, where joint action set $\mathcal{A} = \times_{i \in \mathcal{V}} \mathcal{A}_i$. Each agent $i$ has a policy $\pi_i \in \Pi_i: \mathcal{S} \times \mathcal{A}_i \rightarrow [0,1]$, and we denote the joint policy of all agents as $\boldsymbol{\pi} \in \Pi = \times_{i \in \mathcal{V}} \Pi_i$ \citep{zhang2018fully}. For networked MARL, a common and realistic assumption is that the reward of each agent $i$ just depends on its action and the actions of its neighbors \citep{qu2020intention}, \textit{i.e.}, $r_i(s,\boldsymbol{a})=r_i(s,a_{\mathcal{N}_{i}})$. Moreover, each agent $i$ may only obtain partial observation $o_i \in \mathcal{O}_i$, but can approximate the state by the observations of $\mathcal{N}_{i}$ \citep{jiang2020graph} or the observation history \citep{chu2020multi}, which are all denoted by $o_i$ for \textit{simplicity}.
	The \textit{global objective} is to maximize the sum of cumulative rewards of all agents , \textit{i.e.}, $\sum_{t=0}^{\infty} \sum_{i=1}^{N}\gamma^t r_i^t$.
	
	\subsection{Markov Game}
	
	In such a setting, each agent could individually maximizes its own expected return, which is known as Markov game. This may lead to stable outcome or Nash equilibrium, which however is usually sub-optimal
	in terms of the global objective. Given $\boldsymbol{\pi}$, the value function of agent $i$ is given by
	\begin{equation}
		\label{eq:v_i_simple}
		v_i^{\boldsymbol{\pi}}(s) = \sum_{\boldsymbol{a}} \boldsymbol{\pi}(\boldsymbol{a}|s) \sum_{s^\prime} p_a(s^\prime|s,\boldsymbol{a}) [r_i + \gamma v_i^{\boldsymbol{\pi}}(s^\prime)],
	\end{equation}
	where $p_a \in \mathcal{P}_a$ describes the state transitions. A Nash equilibrium is defined as \citep{mguni2019coordinating}
	\begin{equation*}
		\label{M-NE}
		v_i^{(\pi_i,\pi_{-i})}(s) \geq v_i^{(\pi_i^\prime,\pi_{-i})}(s), \quad \forall \pi_i^\prime \in \Pi_i, \forall s \in \mathcal{S}, \forall i \in \mathcal{V},
	\end{equation*}
	where $\pi_{-i} = \times_{j \in \mathcal{V}\backslash \{i\}} \pi_j$.

	\section{Method}
	
	LToS is a decentralized hierarchy. At each agent, the high-level policy determines the weights of reward sharing based on low-level policies while the low-level policy directly interacts with the environment to optimize the local objective induced by the high-level policies. Therefore, they form a bi-level optimization and alternately learn towards the global objective. 
	
	\subsection{Reward Sharing}
	
	The intuition of reward sharing is that if agents share their rewards with others, each agent has to consider the consequence of its actions on others, and thus it promotes cooperation. In networked MARL, as the reward of an agent is assumed to depend on the actions of neighbors, we allow reward sharing only between neighboring agents. This is because the change of actions of neighbors directly affects the reward while the agents outside the neighborhood can only affect the return of the agent indirectly by the change of state distribution. Moreover, this also fits the setting of networked MARL with restricted communication between neighbors.
	
	For the graph of $\mathcal{V}$, we additionally define a set of directed edges, $\mathcal{D}$, constructed from $\mathcal{E}$. Specifically, we add a loop $d_{ii} \in \mathcal{D}$ for each agent $i$ and split each undirected edge $e_{ij} \in \mathcal{E}$ into two directed edges: $d_{ij} = (i, j)$ and $d_{ji} = (j, i) \in \mathcal{D}$. Each agent $i$ determines a weight $w_{ij} \in [0,1]$ for each directed edge $d_{ij}, \forall j \in \mathcal{N}_{i}$, subject to the constraint $\sum_{j\in \mathcal{N}_{i}} w_{ij} = 1$, so that $w_{ij}$ proportion of agent $i$'s environment reward $r_i$ will be shared to agent $j$. Let $\boldsymbol{w} \in \mathcal{W} = \times_{d_{ij} \in \mathcal{D}} w_{ij}$ be the weights of the graph. Therefore, the shaped reward after sharing for each agent $i$ is defined as 
	\begin{equation}
		\label{eq:w}
		r_{i}^{\boldsymbol{w}} = \sum_{j \in \mathcal{N}_{i}} w_{ji}r_j.
	\end{equation}
	
	\subsection{Hierarchy}
	
	Assume there is a joint high-level policy $\boldsymbol{\phi} \in \Phi: \mathcal{S \times W} \rightarrow [0,1]$ to determine $\boldsymbol{w}$. Given $\boldsymbol{\phi}$ and $\boldsymbol{w}$, we can define the value function of $\boldsymbol{\pi}$ at each agent $i$ based on (\ref{eq:v_i_simple}) as
	\begin{gather}
		\begin{split}
			\label{v_i_phi}
			v_{i}^{\boldsymbol{\pi}}(s;\boldsymbol{\phi})
			= \sum_{\boldsymbol{w}} \boldsymbol{\phi}(\boldsymbol{w}|s) \sum_{\boldsymbol{a}}  \boldsymbol{\pi}(\boldsymbol{a}|s,\boldsymbol{w}) 
			\sum_{s^\prime} p_a(s^\prime|s,\boldsymbol{a}) [r_{i}^{\boldsymbol{w}} + \gamma v_{i}^{\boldsymbol{\pi}}(s^\prime;\boldsymbol{\phi})],
		\end{split}
		\\
		\begin{split}
			\label{v_i_w}
			\nonumber
			v_{i}^{\boldsymbol{\pi}}(s;\boldsymbol{w}, \boldsymbol{\phi})
			= \sum_{\boldsymbol{a}}  \boldsymbol{\pi}(\boldsymbol{a}|s,\boldsymbol{w})
			\sum_{s^\prime} p_a(s^\prime|s,\boldsymbol{a}) [r_{i}^{\boldsymbol{w}} + \gamma v_{i}^{\boldsymbol{\pi}}(s^\prime; \boldsymbol{\phi})].
		\end{split}
	\end{gather}
	
	It is noteworthy that $\boldsymbol{w}$ is a multidimensional action for an allocation scheme rather than a probability distribution. In our derivation, we
	express $\boldsymbol{w}$ as a discrete action for simplicity. It also holds for continuous action as long as we change all the summations to integrals.
	Let $V_{\mathcal{V}}^{\boldsymbol{\phi}}(s;\boldsymbol{\pi}) \doteq \sum_{i \in \mathcal{V}} v_{i}^{\boldsymbol{\pi}}(s; \boldsymbol{\phi})$ and $Q_{\mathcal{V}}^{\boldsymbol{\phi}}(s, \boldsymbol{w};\boldsymbol{\pi}) \doteq \sum_{i \in \mathcal{V}} v_{i}^{\boldsymbol{\pi}}(s; \boldsymbol{w}, \boldsymbol{\phi})$.
	
	\begin{proposition}
		\label{prop:1}
		Given $\boldsymbol{\pi}$, $V_{\mathcal{V}}^{\boldsymbol{\phi}}(s;\boldsymbol{\pi})$ and $Q_{\mathcal{V}}^{\boldsymbol{\phi}}(s, \boldsymbol{w};\boldsymbol{\pi})$ are respectively the value function and action-value function of $\boldsymbol{\phi}$.
	\end{proposition}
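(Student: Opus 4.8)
The plan is to show that the two equations defining $v_{i}^{\boldsymbol{\pi}}(s;\boldsymbol{\phi})$ and $v_{i}^{\boldsymbol{\pi}}(s;\boldsymbol{w},\boldsymbol{\phi})$, once summed over all agents $i\in\mathcal{V}$, become exactly the Bellman consistency equations of a single-policy MDP in which $\boldsymbol{\phi}$ is the acting policy, the action is the sharing scheme $\boldsymbol{w}$, and the reward is the global reward $\sum_{i}r_i$. The crux is a conservation identity: reward sharing merely redistributes reward among neighbors without altering the total. I would establish this identity first and then let it collapse the summed recursions.

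First I would prove that $\sum_{i\in\mathcal{V}} r_i^{\boldsymbol{w}} = \sum_{i\in\mathcal{V}} r_i$ for every $\boldsymbol{w}\in\mathcal{W}$. Substituting (\ref{eq:w}) gives $\sum_{i} r_i^{\boldsymbol{w}} = \sum_{i}\sum_{j\in\mathcal{N}_{i}} w_{ji} r_j$. Because the graph is undirected, $j\in\mathcal{N}_{i}\iff i\in\mathcal{N}_{j}$, so I can exchange the order of summation and group by the sender $j$, obtaining $\sum_{j} r_j \sum_{i\in\mathcal{N}_{j}} w_{ji}$. The inner sum is the total weight that agent $j$ allocates over its outgoing directed edges, which equals $1$ by the normalization constraint $\sum_{i\in\mathcal{N}_{j}} w_{ji}=1$. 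Hence the redistributed total equals $\sum_{j} r_j$. This is the only place the edge orientation and the normalization constraint are used, and getting the direction of $w_{ji}$ right (what $j$ sends to $i$, normalized over $j$'s out-edges) is the one step that needs care.

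Next I would sum the recursion (\ref{v_i_phi}) over $i$. All sums are finite, so I may freely interchange $\sum_{i}$ with the sums over $\boldsymbol{w}$, $\boldsymbol{a}$, and $s^\prime$ and push it inside the bracket. Applying the conservation identity to replace $\sum_{i} r_i^{\boldsymbol{w}}$ by $\sum_{i} r_i$, and using $V_{\mathcal{V}}^{\boldsymbol{\phi}}(s^\prime;\boldsymbol{\pi})=\sum_{i} v_{i}^{\boldsymbol{\pi}}(s^\prime;\boldsymbol{\phi})$, yields
\[
V_{\mathcal{V}}^{\boldsymbol{\phi}}(s;\boldsymbol{\pi}) = \sum_{\boldsymbol{w}}\boldsymbol{\phi}(\boldsymbol{w}|s)\sum_{\boldsymbol{a}}\boldsymbol{\pi}(\boldsymbol{a}|s,\boldsymbol{w})\sum_{s^\prime} p_a(s^\prime|s,\boldsymbol{a})\Big[\textstyle\sum_{i} r_i + \gamma V_{\mathcal{V}}^{\boldsymbol{\phi}}(s^\prime;\boldsymbol{\pi})\Big].
\]
Summing the (unnumbered) recursion for $v_{i}^{\boldsymbol{\pi}}(s;\boldsymbol{w},\boldsymbol{\phi})$ in the same way gives the analogous identity for $Q_{\mathcal{V}}^{\boldsymbol{\phi}}(s,\boldsymbol{w};\boldsymbol{\pi})$ with the leading $\sum_{\boldsymbol{w}}\boldsymbol{\phi}(\boldsymbol{w}|s)$ removed, so that $V_{\mathcal{V}}^{\boldsymbol{\phi}}(s;\boldsymbol{\pi})=\sum_{\boldsymbol{w}}\boldsymbol{\phi}(\boldsymbol{w}|s)\,Q_{\mathcal{V}}^{\boldsymbol{\phi}}(s,\boldsymbol{w};\boldsymbol{\pi})$.

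Finally I would identify these as the defining equations of a high-level MDP whose reward is the global reward $\sum_{i} r_i$ and whose transition kernel $\tilde{p}(s^\prime|s,\boldsymbol{w})=\sum_{\boldsymbol{a}}\boldsymbol{\pi}(\boldsymbol{a}|s,\boldsymbol{w})\,p_a(s^\prime|s,\boldsymbol{a})$ is obtained by marginalizing the environment dynamics over the fixed low-level policy $\boldsymbol{\pi}$. Under this MDP the two displayed equations are precisely the Bellman expectation equations for policy $\boldsymbol{\phi}$, and the relation $V_{\mathcal{V}}^{\boldsymbol{\phi}}=\sum_{\boldsymbol{w}}\boldsymbol{\phi}\,Q_{\mathcal{V}}^{\boldsymbol{\phi}}$ confirms the value/action-value pairing; since these equations have a unique solution for $\gamma<1$, $V_{\mathcal{V}}^{\boldsymbol{\phi}}$ and $Q_{\mathcal{V}}^{\boldsymbol{\phi}}$ must be the value and action-value functions of $\boldsymbol{\phi}$, as claimed. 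The main obstacle is the conservation identity of the first step; everything afterward is bookkeeping verifying that the summed Bellman equations match the standard single-agent form.
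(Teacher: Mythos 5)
Your proposal is correct and follows essentially the same route as the paper's proof: both hinge on the conservation identity $\sum_{i} r_i^{\boldsymbol{w}} = \sum_i r_i$ (the paper proves it by re-indexing the double sum over the directed edges $\mathcal{D}$, you by swapping summation order and invoking $\sum_{i \in \mathcal{N}_j} w_{ji} = 1$ --- the same computation), and both then sum the per-agent recursions and absorb $\boldsymbol{\pi}$ into a marginalized transition kernel $\sum_{\boldsymbol{a}} \boldsymbol{\pi}(\boldsymbol{a}|s,\boldsymbol{w}) p_a(s^\prime|s,\boldsymbol{a})$ to recognize the Bellman expectation equations of a high-level MDP with global reward. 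Your closing appeal to uniqueness of the Bellman fixed point for $\gamma < 1$ is a small touch of extra rigor the paper leaves implicit, but it does not change the argument.
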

	\begin{proof}
		The proof is deferred to Appendix \ref{app:proofs}. 
	\end{proof}
	
	Proposition \ref{prop:1} implies that $\boldsymbol{\phi}$ directly optimizes the global objective by generating $\boldsymbol{w}$, given $\boldsymbol{\pi}$. Unlike existing hierarchical RL methods, we can directly construct the value function and action-value function of $\boldsymbol{\phi}$ based on the value function of $\boldsymbol{\pi}$ at each agent. 
	
	As $\boldsymbol{\phi}$ optimizes the global objective given $\boldsymbol{\pi}$ while $\pi_i$ optimizes the shaped reward individually at each agent given $\boldsymbol{\phi}$ (assuming $\boldsymbol{\pi}$ convergent to Nash equilibrium or stable outcome, denoted as $\lim$), they form a bi-level optimization. Let $J_{\boldsymbol{\phi}}(\boldsymbol{\pi})$ and
	$J_{\boldsymbol{\pi}}(\boldsymbol{\phi})$ denote the objectives of $\boldsymbol{\phi}$ and $\boldsymbol{\pi}$ respectively. 
	The bi-level optimization can be formulated as follows,
	\begin{equation}
		\begin{aligned}
			\label{eq:bilevel-c}
			\max_{\boldsymbol{\phi}} & \quad J_{\boldsymbol{\phi}}(\boldsymbol{\pi}^*(\boldsymbol{\phi})) \\
			s.t. & \quad \boldsymbol{\pi}^*(\boldsymbol{\phi}) = \arg \lim_{\boldsymbol{\pi}} J_{\boldsymbol{\pi}}(\boldsymbol{\phi}).
		\end{aligned}
	\end{equation}
	
	\subsection{Decentralized Learning}
	
	We start from collective learning to achieve global optimization of average reward.
	So far, the joint high-level policy is still in a centralized form. 
	Note that the scenario needs a decentralized method and each agent has its own reward.
	Now we turn to learning the joint high-level policy in a decentralized way.
	Let $w_{i}^{\text{out}} \doteq \{w_{ij} | j \in \mathcal{N}_{i}\}$ and $w_{i}^{\text{in}} \doteq \{w_{ji} | j \in \mathcal{N}_{i}\}$.
	The following proposition proves each agent’s independence of each other on the high level. 
	
	\begin{proposition}
		\label{prop:3}
		The joint high level policy $\boldsymbol{\phi}$ can be learned in a decentralized manner, and the decentralized high-level policies of all agents form a mean-field approximation of $\boldsymbol{\phi}$.
	\end{proposition}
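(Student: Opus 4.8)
The plan is to establish the two claims in turn: that the update of $\boldsymbol{\phi}$ decouples into per-agent updates, and that the resulting per-agent policies constitute a mean-field approximation of the joint $\boldsymbol{\phi}$.

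\textbf{Factorization and per-agent policy gradient.} First I would note that each agent $i$ is the sole controller of its outgoing weights $w_i^{\text{out}}$, and that the normalization constraint $\sum_{j \in \mathcal{N}_i} w_{ij} = 1$ couples only the edges leaving $i$; hence the weight space factorizes, $\mathcal{W} = \times_{i \in \mathcal{V}} \mathcal{W}_i^{\text{out}}$. I would then posit the factorized high-level policy $\boldsymbol{\phi}(\boldsymbol{w}\mid s) = \prod_{i \in \mathcal{V}} \phi_i(w_i^{\text{out}}\mid s)$ and invoke Proposition \ref{prop:1}, which identifies $Q_{\mathcal{V}}^{\boldsymbol{\phi}}$ as the action-value of $\boldsymbol{\phi}$. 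Since $\nabla_{\phi_i}\log\boldsymbol{\phi}(\boldsymbol{w}\mid s) = \nabla_{\phi_i}\log\phi_i(w_i^{\text{out}}\mid s)$, the policy gradient theorem yields $\nabla_{\phi_i} J_{\boldsymbol{\phi}} = \mathbb{E}\big[\nabla_{\phi_i}\log\phi_i(w_i^{\text{out}}\mid s)\,Q_{\mathcal{V}}^{\boldsymbol{\phi}}(s,\boldsymbol{w};\boldsymbol{\pi})\big]$, so each agent updates its own high-level policy with its own score function.

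\textbf{Localization.} Because $\nabla_{\phi_i}\log\phi_i$ depends only on $w_i^{\text{out}}$, any additive component of $Q_{\mathcal{V}}^{\boldsymbol{\phi}}$ independent of $w_i^{\text{out}}$ acts as a baseline and vanishes in expectation. Using $r_j^{\boldsymbol{w}} = \sum_{k \in \mathcal{N}_j} w_{kj} r_k$ from (\ref{eq:w}), the outgoing weights $w_i^{\text{out}}$ enter the shaped rewards only through $\{r_j^{\boldsymbol{w}}\}_{j \in \mathcal{N}_i}$, hence directly only through the neighbor values $\{v_j^{\boldsymbol{\pi}}\}_{j \in \mathcal{N}_i}$, while any influence on agents outside $\mathcal{N}_i$ is purely indirect via the state distribution. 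I would make this precise by writing $Q_{\mathcal{V}}^{\boldsymbol{\phi}} = Q_i^{\text{loc}} + Q_i^{\perp}$ with $Q_i^{\text{loc}} \doteq \sum_{j \in \mathcal{N}_i} v_j^{\boldsymbol{\pi}}(s;\boldsymbol{w},\boldsymbol{\phi})$, and showing that $Q_i^{\perp}$ behaves as a baseline, so the gradient reduces to a local action-value depending on $i$'s own weights and those of its neighbors. This is exactly the claimed independence on the high level.

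\textbf{Mean-field approximation.} To eliminate the residual dependence of $Q_i^{\text{loc}}$ on the neighbors' weights, I would follow the mean-field construction: decompose $Q_i^{\text{loc}}$ into pairwise terms over the edges incident to $i$, introduce the mean weight $\bar{w}_i$ of the interacting neighbors, write each neighbor weight as $\bar{w}_i$ plus a fluctuation, and Taylor-expand about $\bar{w}_i$. The first-order fluctuation terms cancel by the definition of the mean, leaving $Q_i^{\text{loc}} = \bar{Q}_i(s, w_i^{\text{out}}, \bar{w}_i;\boldsymbol{\pi}) + R_i$. Hence the product policy $\prod_{i} \phi_i(w_i^{\text{out}}\mid s)$, with each $\phi_i$ best-responding to its own weight and the mean field $\bar{w}_i$, is a mean-field approximation of the centralized $\boldsymbol{\phi}$.

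\textbf{Main obstacle.} The principal difficulty is controlling the two approximation errors rather than the algebra. Discarding $Q_i^{\perp}$ requires arguing that the out-of-neighborhood, indirect influence of $w_i^{\text{out}}$ is negligible, which is where a locality or exponential-decay assumption on the network enters; and bounding the Taylor remainder $R_i$ requires a uniform smoothness (bounded-Hessian) assumption on $Q_i^{\text{loc}}$ together with small fluctuations, as in mean-field MARL. I would state these assumptions explicitly and verify that $R_i$ is uniformly bounded (e.g. $O(1/|\mathcal{N}_i|)$), which is what makes the factorized, mean-field policy a faithful approximation of $\boldsymbol{\phi}$.
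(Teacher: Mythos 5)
Your second step (localization) does match the paper's own argument: the paper likewise defines $Q_{\mathcal{N}_i}^{\boldsymbol{\phi}}(s,\boldsymbol{w};\boldsymbol{\pi}) \doteq \sum_{j \in \mathcal{N}_i} v_j^{\boldsymbol{\pi}}(s; w_j^{\text{in}}, \boldsymbol{\phi})$, notes that only these neighbor value terms correlate with $w_i^{\text{out}}$ while $\{v_j^{\boldsymbol{\pi}}\}_{j \in \mathcal{V}\backslash\mathcal{N}_i}$ do not, and concludes that optimizing the local objective is equivalent to optimizing $Q_{\mathcal{V}}^{\boldsymbol{\phi}}$, so $J_{\boldsymbol{\phi}}$ factorizes into $\{J_{\phi_i}\}$. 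The genuine gap is in your first step, which is the heart of the proposition: you \emph{posit} the factorization $\boldsymbol{\phi}(\boldsymbol{w}\mid s) = \prod_{i\in\mathcal{V}} \phi_i(w_i^{\text{out}}\mid s)$ on the grounds that the weight space factorizes as $\times_i \mathcal{W}_i^{\text{out}}$. That the \emph{action space} is a product is true in any multi-agent problem and says nothing about whether the joint \emph{distribution} factorizes; the product form is precisely the mean-field claim to be established, so positing it assumes the conclusion. The paper instead derives it: it builds a graph $\mathcal{G}'$ whose vertices are the directed edges $d_{ij}$ carrying actions $w_{ij}$, observes that the simplex constraint $\sum_{j\in\mathcal{N}_i} w_{ij}=1$ means each $w_{ij}$ need only be determined jointly with its siblings $\{w_{ik}\}_{k\in\mathcal{N}_i\backslash\{j\}}$ — i.e., an MRF on $\mathcal{G}'$ — invokes the Hammersley--Clifford theorem (Lemma \ref{lemma:1}) to obtain a Gibbs factorization over cliques, and identifies the maximal cliques as exactly the per-agent outgoing sets $\{d_{ij} \mid j \in \mathcal{N}_i\}$. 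This yields $\boldsymbol{\phi}(\boldsymbol{w}\mid s) \approx \prod_i \phi_i(w_i^{\text{out}}\mid s)$ as a deliberately \emph{incomplete} factorization: the constraint forbids factoring inside a clique, which is why each agent owns the joint policy over its own outgoing weights rather than per-edge policies $\phi_{ij}$. None of this structure appears in your argument.

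Your third step also reads ``mean-field'' in a different sense than the paper intends. You carry out the mean-action construction of mean-field MARL — average the neighbors' weights into $\bar{w}_i$, Taylor-expand $Q$, cancel first-order fluctuations, bound the remainder $R_i$ — whereas the proposition's claim is mean-field in the variational sense: the product of decentralized policies approximates the joint policy as a distribution, which is exactly what the MRF/Hammersley--Clifford factorization delivers. Beyond proving a different statement, your construction is not well posed here: each $w_j^{\text{out}}$ lives on a simplex over $\mathcal{N}_j$, of varying dimension and over different index sets, so the ``mean weight $\bar{w}_i$'' has no canonical meaning, and the first-order cancellation presupposes an exchangeability among neighbors that the directed, constrained edge weights do not have. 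Relatedly, the auxiliary assumptions you flag as the main obstacle (exponential decay of influence, uniformly bounded Hessians, $O(1/|\mathcal{N}_i|)$ remainders) are artifacts of this route; the paper's proof needs none of them, absorbing the corresponding slack into the single approximation $\boldsymbol{\phi} \approx \prod_i \phi_i$ and the stated non-correlation of out-of-neighborhood value terms with $w_i^{\text{out}}$.
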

	\begin{proof}
		The proof is deferred to Appendix \ref{app:proofs}. 
	\end{proof}
	
	Proposition~\ref{prop:1} and \ref{prop:3} indicate that for each agent $i$, the low-level policy simply learns a local $\pi_i(a_i|s,w_{i}^{\text{in}})$
	to optimize the cumulative reward of $r_{i}^{\boldsymbol{w}}$, since $r_{i}^{\boldsymbol{w}}$ is fully determined by $w_{i}^{\text{in}}$ according to (\ref{eq:w}) and denoted as $r_{i}^{w}$ from now on. And the high-level policy $\phi_i$ just needs to locally determine $w_{i}^{\text{out}}$
	to optimize the cumulative reward of $r_{\mathcal{V}}^{\boldsymbol{\phi}}$. 
	
	

	Therefore, for decentralized learning, (\ref{eq:bilevel-c}) can be decomposed locally for each agent $i$ as
	\begin{equation}
		\begin{aligned} 
			\label{eq:bilevel-d}
			\max_{\phi_i} \quad & J_{\phi_i}(\phi_{-i}, \pi_1^*(\boldsymbol{\phi}),\cdots, \pi_N^*(\boldsymbol{\phi}))\\
			s.t. \quad & \pi_i^*(\boldsymbol{\phi}) = \arg\max_{\pi_i} J_{\pi_i}(\pi_{-i},\phi_1(\boldsymbol{\pi}),\cdot\cdot,\phi_N(\boldsymbol{\pi})).
		\end{aligned}
	\end{equation}
	Now we use $\max$ instead of $\lim$ because local policies can be compared and improved in a decentralized manner in a Markov game. 
	We abuse the notation and let $\phi$ and $\pi$ also denote their parameterizations respectively. To solve the optimization, we have
	\begin{equation}
		\label{approximation}
		\begin{aligned} 
			\nabla_{\phi_i} J_{\phi_i}(\phi_{-i}, &\, \pi_1^*(\boldsymbol{\phi}),\cdots,\pi_N^*(\boldsymbol{\phi})) \\
			&\approx \nabla_{\phi_i} J_{\phi_i}(\phi_{-i}, \pi_1 + \alpha \nabla_{\pi_1} J_{\pi_1}(\boldsymbol{\phi}),\cdots,\pi_N + \alpha \nabla_{\pi_N} J_{\pi_N}(\boldsymbol{\phi})),
		\end{aligned}
	\end{equation}
	where $\alpha$ is the learning rate for the low-level policy. Let $\pi_i^\prime$ denote $\pi_i + \alpha \nabla_{\pi_i} J_{\pi_i}(\boldsymbol{\phi})$, we have
	\begin{equation}
		\nonumber
		\begin{aligned} 
			\nabla_{\phi_i} J_{\phi_i}&(\phi_{-i}, \pi_1^*(\boldsymbol{\phi}),\cdots,\pi_N^*(\boldsymbol{\phi})) \\
			&\approx \nabla_{\phi_i} J_{\phi_i}(\phi_{-i}, \pi_1^\prime, \cdots, \pi_N^\prime) + \alpha \sum_{j=1}^N \nabla_{\phi_i,\pi_j}^2 J_{\pi_j}(\boldsymbol{\phi}) \nabla_{\pi_j^\prime} J_{\phi_i}(\phi_{-i}, \pi_1^\prime, \cdots, \pi_N^\prime).
		\end{aligned}
	\end{equation}
	
	The second-order derivative is neglected due to high computational complexity, without incurring significant performance drop such as in meta-learning \citep{finn2017modelagnostic}
	and neural architecture search \citep{liu2019darts}. 
	Differently, our low-level policy requires more than one gradient step until convergence.
	Similarly, we have 
	\begin{equation}
		\nonumber
		\begin{aligned} 
			\nabla_{\pi_i} J_{\pi_i} (\pi_{-i}, &\, \phi_1^*(\boldsymbol{\pi}),\cdots,\phi_N^*(\boldsymbol{\pi})) \\
			& \approx \nabla_{\pi_i} J_{\pi_i}(\pi_{-i}, \phi_1 + \beta \nabla_{\phi_1}J_{\phi_{1}}(\boldsymbol{\pi}), \cdots, \phi_N + \beta \nabla_{\phi_N} J_{\phi_N}(\boldsymbol{\pi})), 
		\end{aligned}
	\end{equation}
	where $\beta$ is the learning rate of the high-level policy. Therefore, we can solve the bi-level optimization (\ref{eq:bilevel-c}) by the first-order approximations in a decentralized way. For each agent $i$, $\phi_i$ and $\pi_i$ are alternately updated. 
	
	\begin{wrapfigure}{r}{0.4\textwidth}
		\setlength{\abovecaptionskip}{5pt}
		\vspace{-0.35cm}
		\centering
		\includegraphics[width=0.4\textwidth]{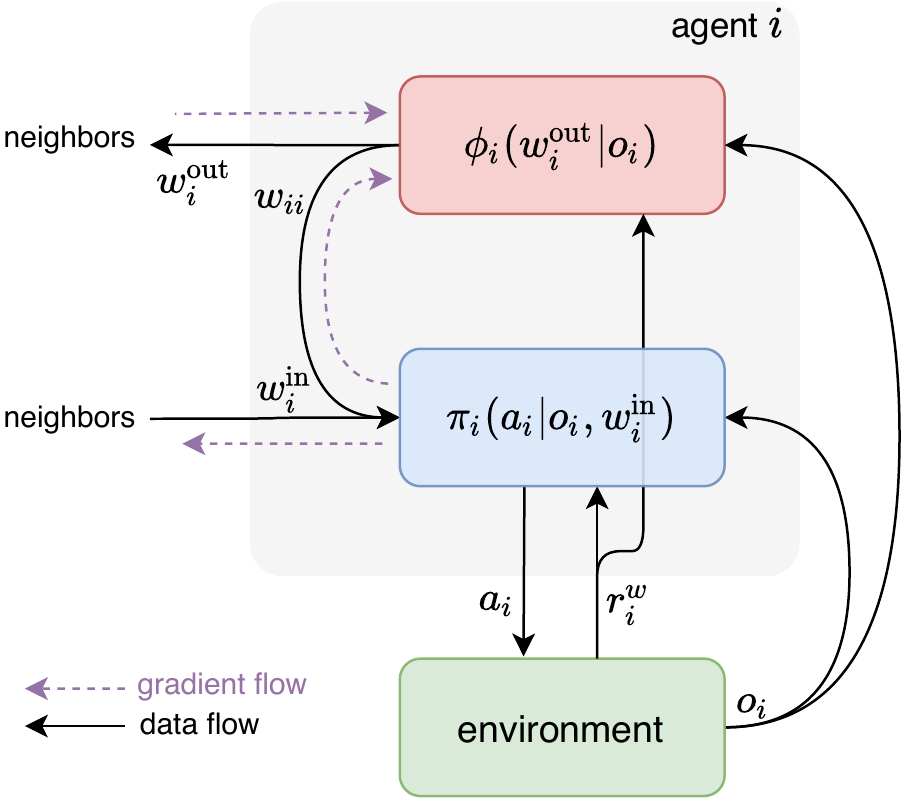}
		\caption{LToS}
		\label{fig:arch}
		\vspace{-0.3cm}
	\end{wrapfigure}
	
	In distributed learning, as each agent $i$ usually does not have access to state, we further approximate $\phi_i(w_{i}^{\text{out}}|s)$ and $\pi_i(a_i|s,w_{i}^{\text{in}})$ by $\phi_i(w_{i}^{\text{out}}|o_i)$ and $\pi_i(a_i|o_i,w_{i}^{\text{in}})$, respectively. Moreover, in network MARL as each agent $i$ is closely related to neighboring agents, (\ref{eq:bilevel-d}) can be further seen as $\pi_i$ maximizes the cumulative discounted reward of $r_{i}^{w}$ given $\phi_{\mathcal{N}_{i}}$, where $\phi_{\mathcal{N}_{i}}=\times_{j \in \mathcal{N}_{i}} \phi_j$, 
	and $\phi_i$ equivalently optimizes the global objective
	given $\pi_{\mathcal{N}_{i}}$
	, where $\pi_{\mathcal{N}_{i}}=\times_{j \in \mathcal{N}_{i}} \pi_j$. 
	During training, $\pi_{\mathcal{N}_{i}}$ and $\phi_{\mathcal{N}_{i}}$ are implicitly considered by interactions of $w_{i}^{\text{out}}$ and $w_{i}^{\text{in}}$ respectively. The architecture of LToS is illustrated in Figure~\ref{fig:arch}. At each timestep, the high-level policy of each agent $i$ makes a decision of action $w_{i}^{\text{out}}$ as the weights of reward sharing based on the observation. Then, the low-level policy takes the observation and $w_{i}^{\text{in}}$ as an input and outputs the action. Agent $i$ obtains the shaped reward according to $w_{i}^{\text{in}}$ for both the high-level and low-level policies. The gradients are backpropagated along purple dotted lines.

	Further, from Proposition \ref{prop:1}, we have:
		$
		q_{i}^{\phi_i}(s,w_{i}^{\text{out}};\pi_{\mathcal{N}_{i}}) = v_{i}^{\pi_i}(s;w_{i}^{\text{in}},\phi_{\mathcal{N}_{i}})
		$,
	where $q_{i}^{\phi_i}$ is the action-value function of $\phi_i$ given $\pi_{\mathcal{N}_{i}}$, $v_{i}^{\pi_i}$ is the value function of $\pi_i$ given $\phi_{\mathcal{N}_{i}}$ and conditioned on $w_i^\text{in}$. As aforementioned, we approximately have $q_{i}^{\phi_i}(o_i,w_{i}^{\text{out}}) = v_{i}^{\pi_i}(o_i;w_{i}^{\text{in}})$. We can see that the action-value function of $\phi_i$ is equivalent to the value function of $\pi_i$. That said, we can use a single network to approximate these two functions simultaneously. For a deterministic low-level policy, the high-level and low-level policies can share 
	the
	same action-value function. In the current instantiation of LToS, we use DGN \citep{jiang2020graph} (Q-learning) for the low-level policy and DDPG \citep{lillicrap2016continuous} for the high-level policy. Thus, the Q-network of DGN also serves as the critic of DDPG, and the gradient of $w_{i}^{\text{in}}$ is calculated based on the maximum Q-value of $a_i$. 
	
	For completeness, Algorithm \ref{alg:ltos} (see Appendix \ref{algorithm}) gives the training procedure of LToS based on DDPG and DGN. More discussions about training LToS are also available in Appendix~\ref{discussion}.

	\section{Experiments}

	For the experiments, we adopt three scenarios \textit{prisoner}, \textit{jungle}, and \textit{traffic} depicted in Figure~\ref{fig:scenarios}, where \textit{prisoner} and \textit{jungle} \citep{jiang2020graph} are grid games about social dilemma that easily measures agents' cooperation, while \textit{traffic} is a realistic scenario of networked MARL. We obey the principle of networked MARL that only allows communication in neighborhood as \citet{zhang2018fully} and \citet{chu2020multi}.
	
	To illustrate the reward sharing scheme each agent learned, we use a simple indicator: \textit{selfishness}, the reward proportion that an agent chooses to keep for itself. For ablation, we keep the sharing weights fixed for each agent, named \textit{fixed} LToS. Throughout the experiments, we additionally compare with the baselines including DQN and DGN, where DGN also serves the ablation of LToS without reward sharing as DGN is the low-level policy of LToS. To maximize the global return directly by centralized learning, we use QMIX \citep{rashid2018qmix} as a baseline throughout the three scenarios and two other ones in \textit{prisoner}. Moreover, as LToS aims to bring a harmonious cooperation by reward sharing in networked MARL, we compared LToS to three methods for networked MARL, \textit{i.e.}, ConseNet \citep{zhang2018fully}, NeurComm \citep{chu2020multi} and Intention Propagation (abbreviated as IP) \citep{qu2020intention}, and LIO \citep{yang2020learning} for incentivized learning, all of which use recurrent neural network (RNN) or graph neural network (GNN) for the partially observable environment.
	More details of hyperparameters are available in Appendix~\ref{sec:hyper}.
	
	\begin{figure*}[!t]
		\centering
		\setlength{\abovecaptionskip}{5pt}
		\begin{minipage}{1\textwidth}
			\setlength{\abovecaptionskip}{5pt}
			\centering
			\subfloat[][\textit{prisoner}]{
				\includegraphics[width=0.7\textwidth]{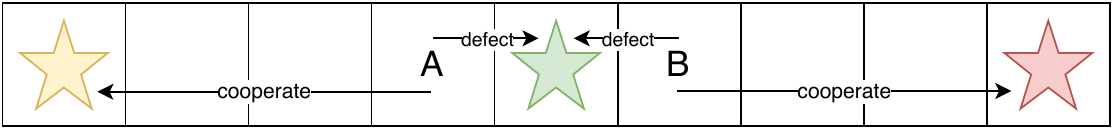}
				\label{fig:prisoner-scenario}}
		\end{minipage}
		\vskip 0.2cm
		\begin{minipage}{0.3\textwidth}
			\setlength{\abovecaptionskip}{5pt}
			\centering
			\subfloat[][\textit{jungle}]{
				\includegraphics[width=1\textwidth]{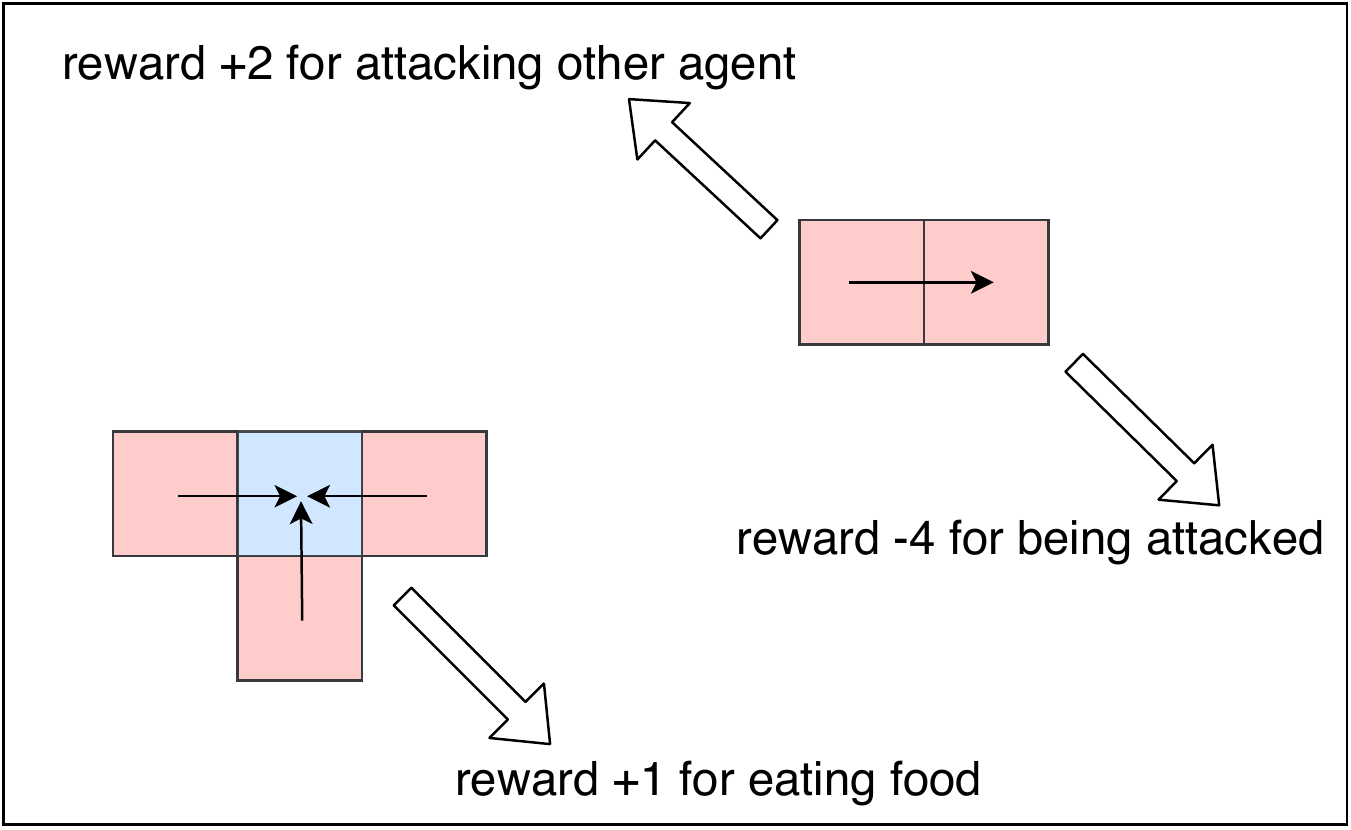}
				\label{fig:jungle-scenario}}
		\end{minipage} \qquad
		\begin{minipage}{0.35\textwidth}
			\setlength{\abovecaptionskip}{5pt}
			\centering
			\subfloat[][\textit{traffic}]{
				\includegraphics[width=1\textwidth]{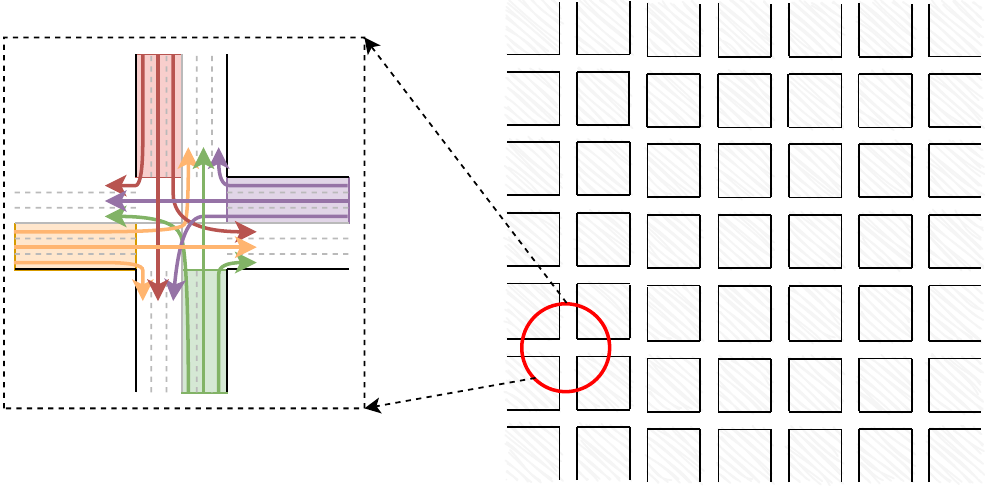}
				\label{fig:traffic-scenario}}
		\end{minipage}
		\caption{Three experimental scenarios: (a) \textit{prisoner}, (b) \textit{jungle}, and (c) \textit{traffic}.}
		\label{fig:scenarios}
		\vspace*{-3mm}
	\end{figure*}
	
	\subsection{Prisoner}
	
	We use \textit{prisoner}, a grid game version of the well-known matrix game \textit{prisoner's dilemma} from \citet{sodomka2013coco} to empirically demonstrate that LToS is able to learn cooperative policies to achieve the global optimum (\textit{i.e.}, maximize globally averaged return). As illustrated in Figure~\ref{fig:prisoner-scenario}, there are two agents $A$ and $B$ that respectively start on two sides of the middle of a grid corridor with \textit{full} observation. At each timestep, each agent chooses an action \textit{left} or \textit{right} and moves to the corresponding adjacent grid, and every action incurs a cost $-0.01$. There are three goals, two goals at both ends and one in the middle. The agent gets a reward $+1$ for reaching the goal. The game ends once some agent reaches a goal or two agents reach different goals simultaneously. This game resembles prisoner's dilemma: going for the middle goal (``defect") will bring more rewards than the farther one on its side (``cooperate"), but if two agents both adopt that, a collision occurs and only one of the agents wins the goal with equal probability. On the contrary, both agents obtain a higher return if they both ``cooperate", though it takes more steps. The highest possible return is $1$.
	
	Figure~\ref{fig:prisoner-curve} illustrates the learning curves of all the methods in terms of average return. Note that for all three scenarios, we present the average of 5 training runs with different random seeds by solid lines and the min/max value by shadowed areas. As a result of self-interest optimization, DQN converges to the ``defect/defect" Nash equilibrium where each agent receives an expected reward about $0.5$. So does DGN since it only aims to take advantage of its neighbors' observations while \textit{prisoner} is a fully observable environment already. Given a hand-tuned reward shaping factor to direct agents to maximize average return, NeurComm and \textit{fixed} LToS agents are able to cooperate eventually. So are ConseNet and QMIX. However, they converge slowly. In contrast, IP agents learn at a slower pace and its performance is only a little higher than $0.5$. LIO agents cooperate soon enough at the beginning, but they cannot form steady cooperation and perhaps need longer training to get rid of such instability. 
	
	JointDQN, Coco-Q \citep{sodomka2013coco}, and LToS perform similarly and outperform other methods. JointDQN is one centralized DQN that takes control of joint actions of both agent $A$ and $B$, and thus should be able to achieve the best performance but still takes time to converge even in such a simple two-agent scenario. As a modified tabular Q-learning method, Coco-Q introduces the coco value \citep{kalai2010cooperation} as a substitute for the expected return in the Bellman equation and regards the difference as transferred reward. However, it is specifically designed for some games, and it is hard to be extended beyond two-player games. LToS can learn the reward sharing scheme where one agent at first gives all the reward to the other so that both of them are prevented from ``defect", and thus achieve the best average return quickly, as observed in the experiment. By \textit{prisoner}, we verify that LToS can escape from local optimum by learning to share reward.

	\begin{figure*}[!t]
		\centering
		\setlength{\abovecaptionskip}{5pt}
		\begin{minipage}[t]{0.318\textwidth}
			\setlength{\abovecaptionskip}{5pt}
			\centering
			\subfloat[][\textit{prisoner}]{
				\includegraphics[width=1\textwidth]{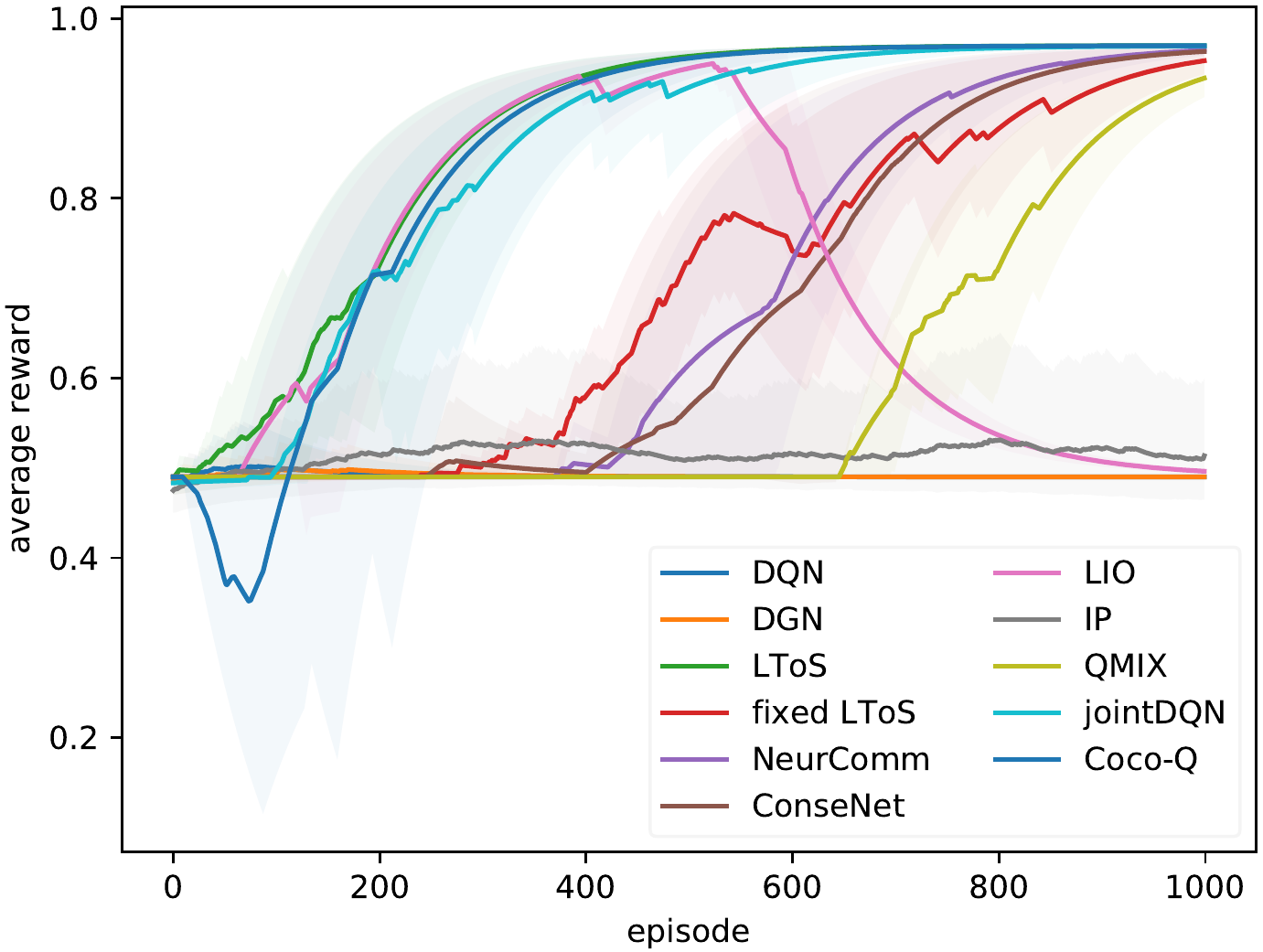}
				\label{fig:prisoner-curve}
			}
		\end{minipage}
		\hskip 0.1cm
		\begin{minipage}[t]{0.323\textwidth}
			\setlength{\abovecaptionskip}{5pt}
			\centering
			\subfloat[][\textit{jungle}]{
				\includegraphics[width=1\textwidth]{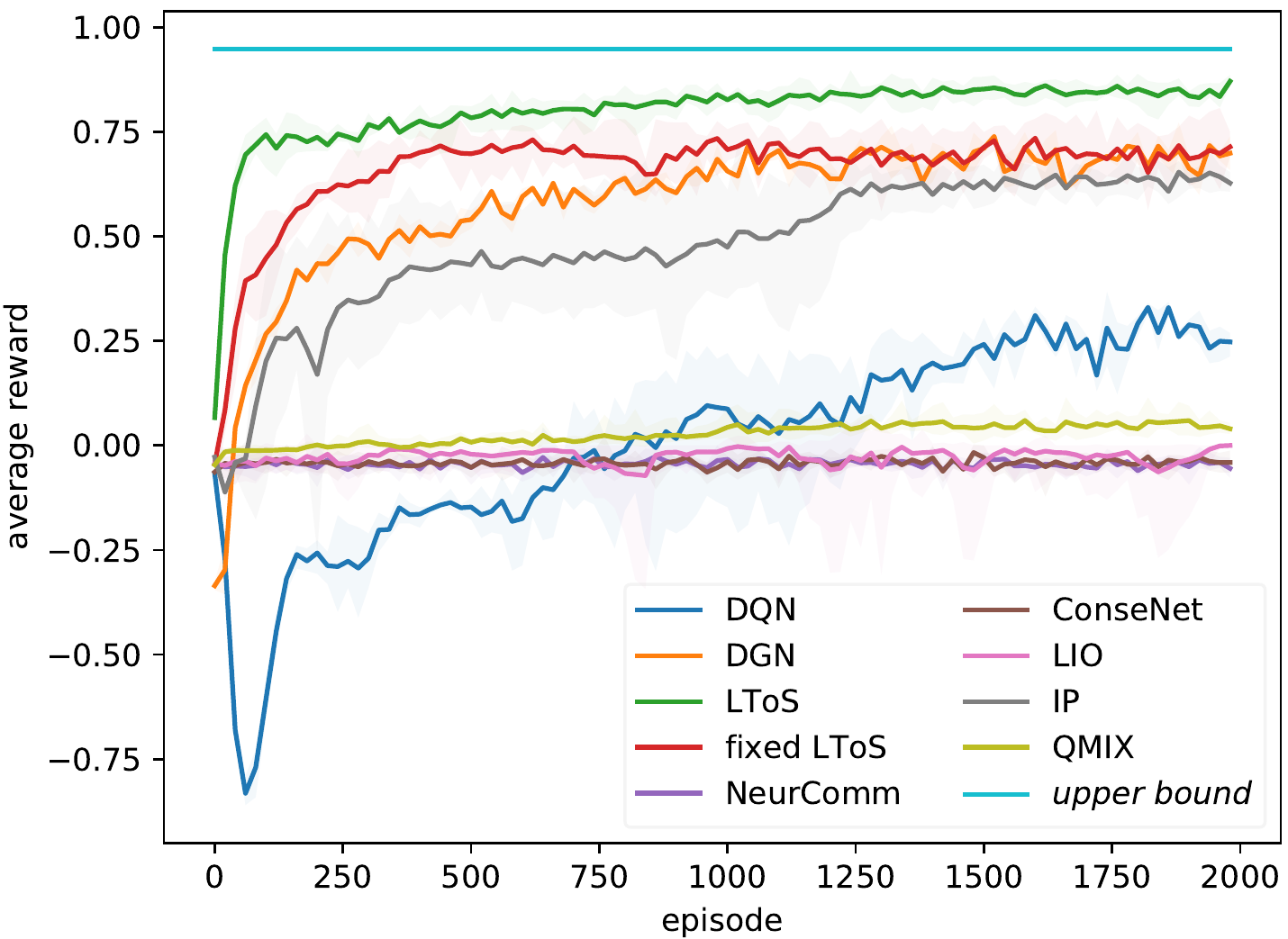}
				\label{fig:jungle-curve}}
		\end{minipage}
		\hskip 0.1cm
		\begin{minipage}[t]{0.315\textwidth}
			\setlength{\abovecaptionskip}{5pt}
			\centering
			\subfloat[][\textit{traffic}]{
				\includegraphics[width=1\textwidth]{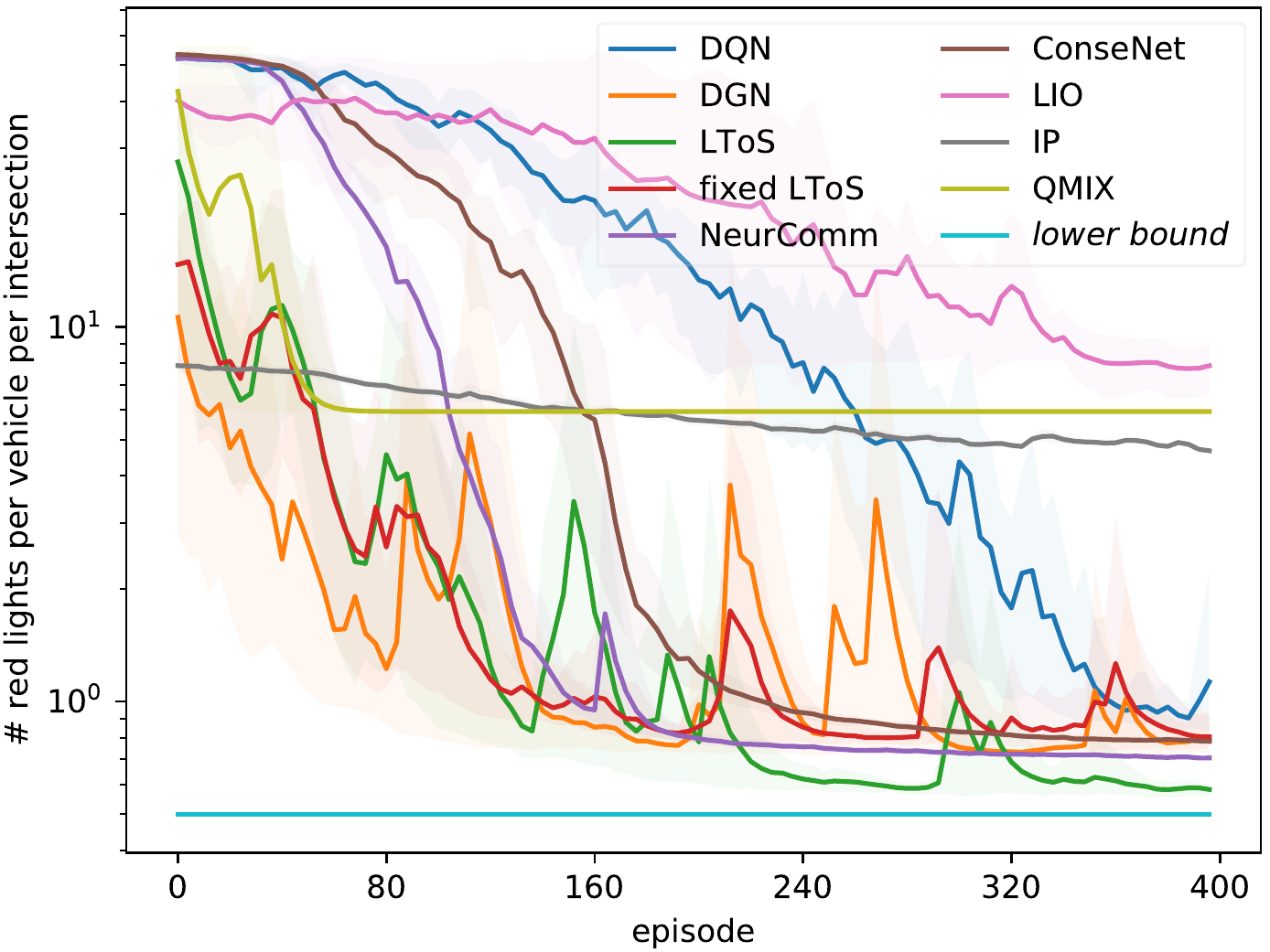}
				\label{fig:traffic-curve}}
		\end{minipage}
		\caption{Learning curves in (a) \textit{prisoner}, (b) \textit{jungle}, and (c) \textit{traffic}. All the curves are plotted using 5 training runs with different random seeds, where the solid line is the mean and the shadowed area is enclosed by the min and max value.}
		\label{fig:curves}
		\vspace{-0.2cm}
	\end{figure*}

	\subsection{Jungle}
	
	\textit{Jungle} is a scenario about moral dilemma proposed by \citet{jiang2020graph} based on MAgent \citep{zheng2017magent}. As illustrated in Figure~\ref{fig:jungle-scenario}, there are $N$ agents and $L$ stationary foods. At each timestep, each agent can attack or move to one adjacent grid. Eating (attacking food) brings a positive reward $+1$, but attacking other agents obtains a higher reward $+2$. The victim, however, suffers a negative reward $-4$, which makes each attack between agents a negative-sum action. Moreover, attacking a blank grid gets a small negative reward $-0.01$ (inhibiting excessive attacks). We follow the original setting of \citet{jiang2020graph}: map size = $30\times30$ grids, $N = 20$, $L = 12$, and the observation consists of one's coordinates and a field of $11\times11$ grids nearby. Each agent has $3$ closest agents as its neighbors. Compared to \textit{prisoner}, \textit{jungle} has much more agents, and thus JointDQN and Coco-Q are disregarded for this scenario. Another challenge of \textit{jungle} is that the network topology is dynamic since each agent can always move. Fortunately, the topology change slowly and predictably, and algorithms may get the time-varying neighbor set $\mathcal{N}_{i}$ as part of the input. Therefore, it is still likely to estimate the shaped rewards and value functions well.
	
	\begin{table}[h]
		\vspace*{-0.2cm}
		\renewcommand{\arraystretch}{1}
		\centering
		\caption{Average reward per step of all the methods in \textit{jungle}.}
		\vskip 0.1cm
		\label{jungle performance}
		\begin{small}
			\setlength{\tabcolsep}{4pt}
			\begin{tabular}{c c c c c c c c c | c}
				\toprule   
				DQN & DGN & \textit{fixed} LToS & \textbf{LToS} & NeurComm & ConseNet & LIO & IP & QMIX & \textit{upper bound} \\  
				\midrule
				0.24 & 
				0.66 & 
				$0.71$ & 
				$\boldsymbol{0.86}$ & 
				-0.05 &
				-0.04 & 
				0.00 & 
				0.63 & 
				0.04 & 
				0.95 \\ 
				\bottomrule  
			\end{tabular}
		\end{small}
	\end{table}

	Figure \ref{fig:jungle-curve} illustrates the learning curves of all the methods, and their performance after convergence is also summarized in Table~\ref{jungle performance}. NeurComm, ConseNet and QMIX do not perform well in this task. In NeurComm, each agent gets a ``delayed global information". However, a stable pattern of delayed global information cannot be formed when the communication is conducted via a dynamic topology. ConseNet is constructed on the basis of a premise of \textit{full observation}, and it can hardly learn well when the input is not only partial but also fairly varying in sequence and content. LIO agents also perform badly, since they cannot be distinguished from one another in the dynamic topology and thus fail to learn a proper incentive function. Moreover, LIO requires opponent modeling, but it is hard to simultaneously model all other agents in a dynamic environment. QMIX is free from these problems. While aiming at global optimization, like LIO, it realizes that attacking usually means a negative-sum action, but as a result, it avoids attacking as well as eating most of the time and thus only achieves a reward slightly higher than 0. Another possible reason to explain the performance QMIX is its scalability. As there are 20 agents in the scenario, it can be hard to learn the joint action-value function to directly optimize the average return \citep{qu2020intention}. Also, we can see that a fixed reward sharing scheme does not bring any gain over DGN. This is because fixed reward sharing does not adapt to the dynamic topology. By learning proper reward sharing and adjusting to changing circumstances, LToS outperforms all other baselines. Note that there is an upper bound for average reward per step for \textit{jungle}. By estimating the average distance between each agent and the food that is the closest to it at the beginning of one episode, we can give a loose upper bound around $0.95$ to reflect our improvement.
	
	\begin{figure*}[!t]
		\centering
		\setlength{\abovecaptionskip}{5pt}
		\begin{minipage}{0.189\textwidth}
			\setlength{\abovecaptionskip}{5pt}
			\centering
			\subfloat[][NeurComm]{
				\includegraphics[width=1\textwidth]{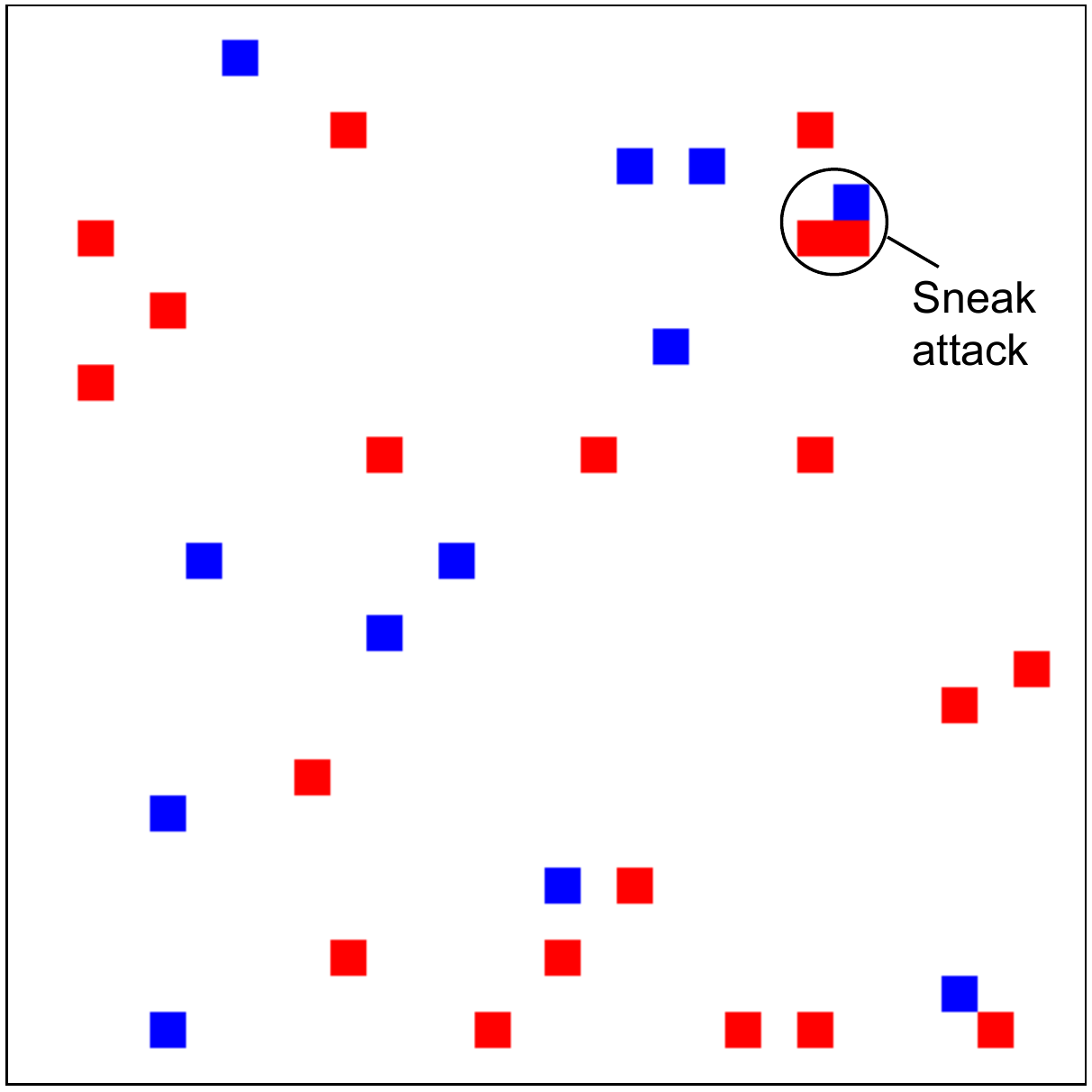}
				\label{fig:jungle-neurcomm}}
		\end{minipage}
		\begin{minipage}{0.19\textwidth}
			\setlength{\abovecaptionskip}{5pt}
			\centering
			\subfloat[][ConseNet]{
				\includegraphics[width=1\textwidth]{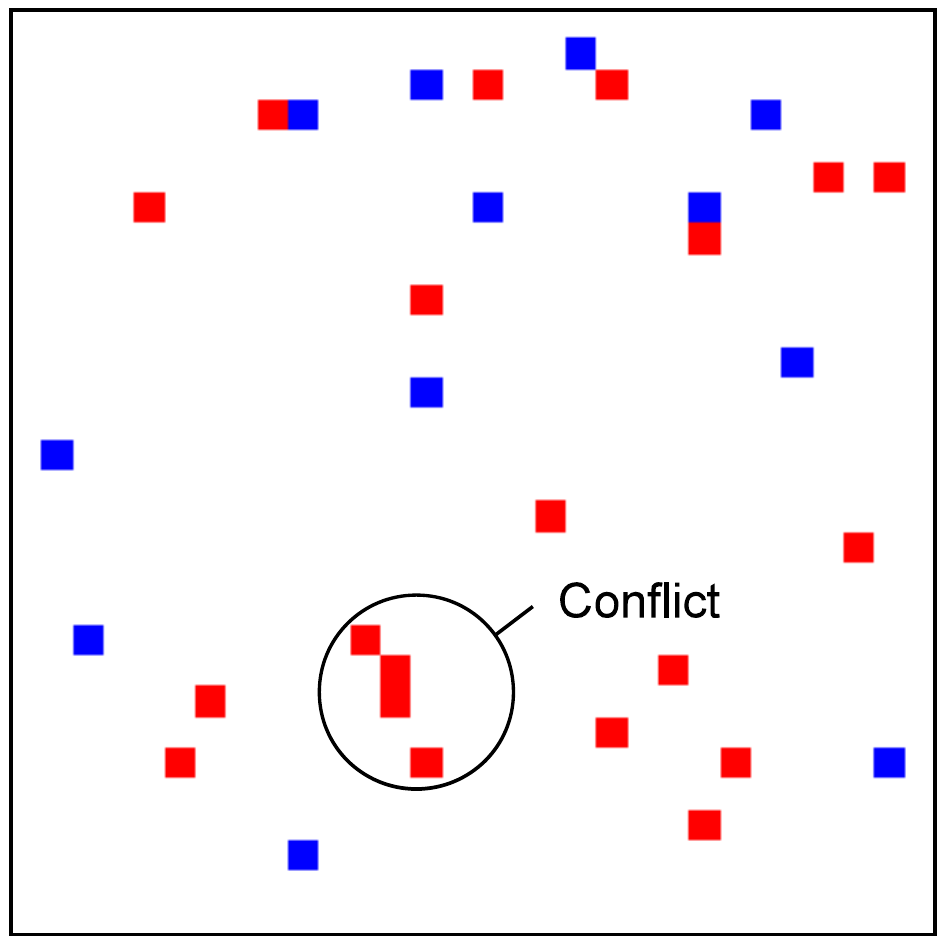}
				\label{fig:jungle-consenet}}
		\end{minipage}
		\begin{minipage}{0.19\textwidth}
			\setlength{\abovecaptionskip}{5pt}
			\centering
			\subfloat[][IP]{
				\includegraphics[width=1\textwidth]{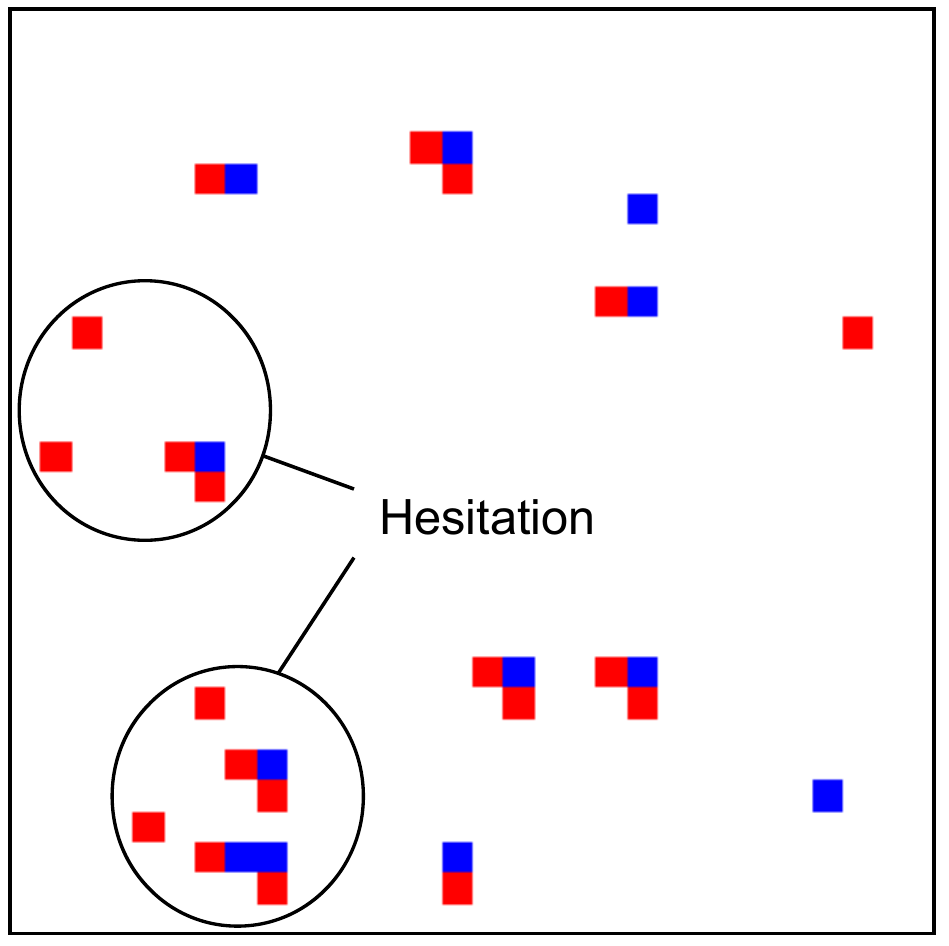}
				\label{fig:jungle-ip}}
		\end{minipage}
		\begin{minipage}{0.188\textwidth}
			\setlength{\abovecaptionskip}{5pt}
			\centering
			\subfloat[][DGN]{
				\includegraphics[width=1\textwidth]{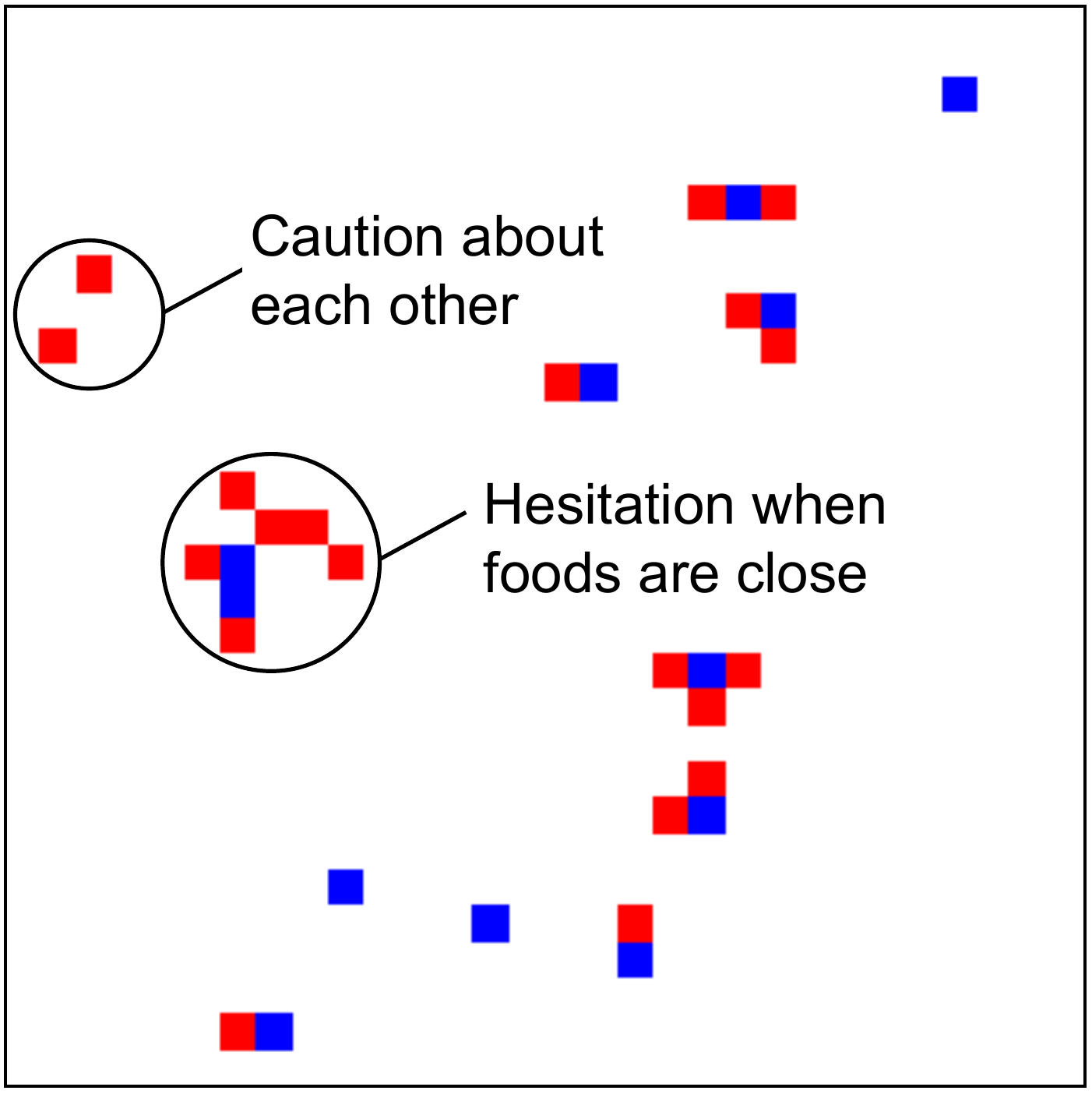}
				\label{fig:jungle-dgn}}
		\end{minipage}
		\hskip 0.3cm
		\begin{minipage}{0.189\textwidth}
			\setlength{\abovecaptionskip}{5pt}
			\centering
			\subfloat[][LToS]{
				\includegraphics[width=1\textwidth]{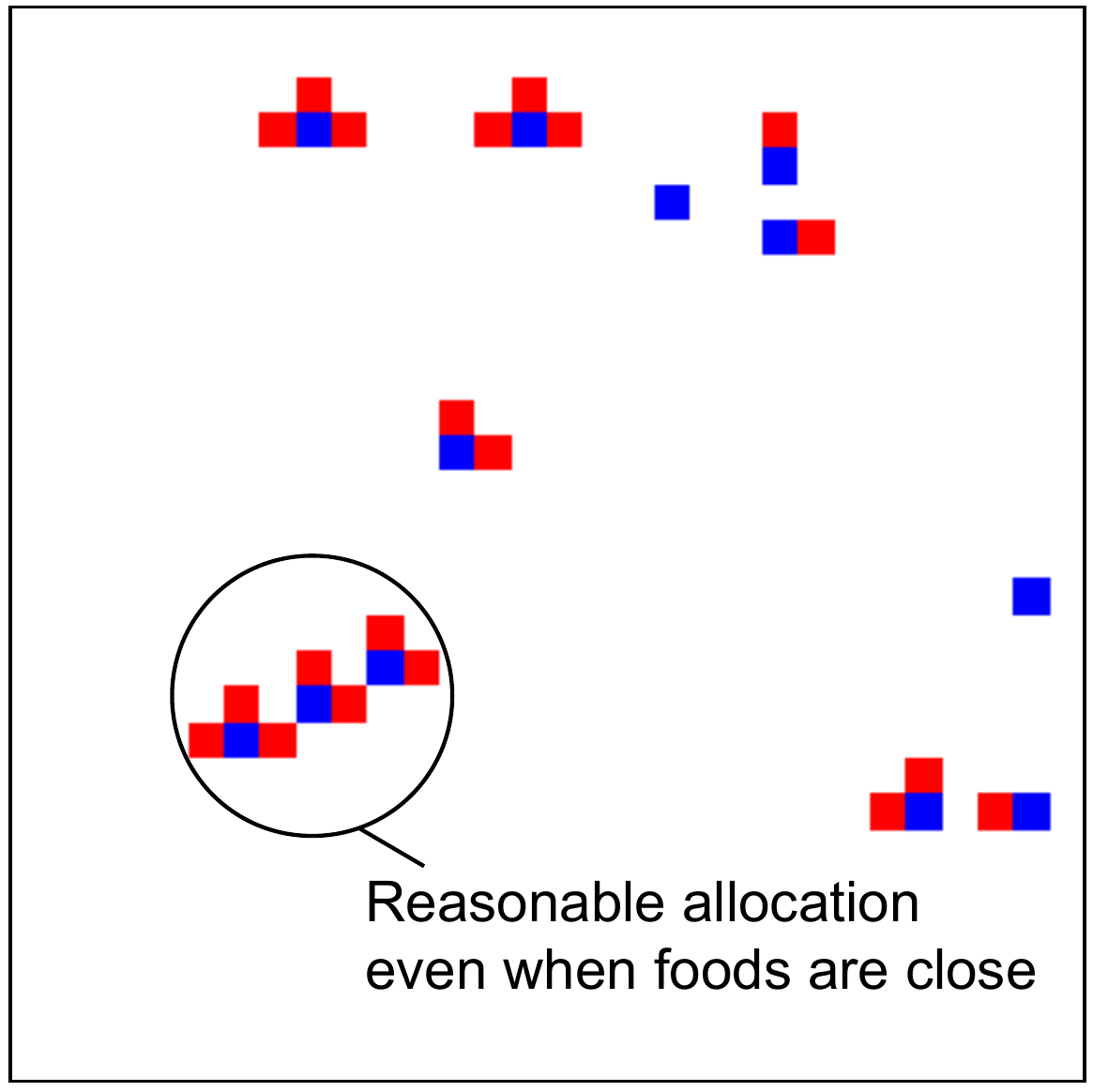}
				\label{fig:jungle-ltos}}
		\end{minipage}
		\caption{Representative behaviors of agents learned by (a) NeurComm, (b) ConseNet, (c) IP, (d) DGN, and (e) LToS in \textit{jungle}.}
		\label{fig:jungle-illustration}
		\vspace{-0.1cm}
	\end{figure*}
	
	\begin{figure}[htbp]
		\centering
		\begin{minipage}{0.4\textwidth}
			\captionsetup[subfloat]{captionskip=10pt}
			\small
			\centering
			\vspace{0.2cm}
			\subfloat[][statistics of performance]{
				\begin{small}
					\begin{tabular}{cccc|c}
						\toprule   
						\# agents & DGN & \textit{fixed} LToS & \textbf{LToS} & \textit{upper bound} \\
						\midrule
						10 & 0.52 & 0.63 & $\boldsymbol{0.71}$ & 0.93 \\
						20 & 0.66 & 0.71 & $\boldsymbol{0.86}$ & 0.95 \\
						30 & 0.77 & 0.79 & $\boldsymbol{0.88}$ & 0.96 \\
						40 & 0.84 & 0.80 & $\boldsymbol{0.90}$ & 0.96 \\
						50 & 0.86 & 0.79 & $\boldsymbol{0.91}$ & 0.97 \\
						\bottomrule
					\end{tabular}
				\end{small}
				\label{tab:jungle-scalable}}
		\end{minipage}
		\hspace{2cm}
		\begin{minipage}[h]{0.39\textwidth}
			\centering
			\subfloat[][line graph of performance]{
				\includegraphics[width=0.7\textwidth]{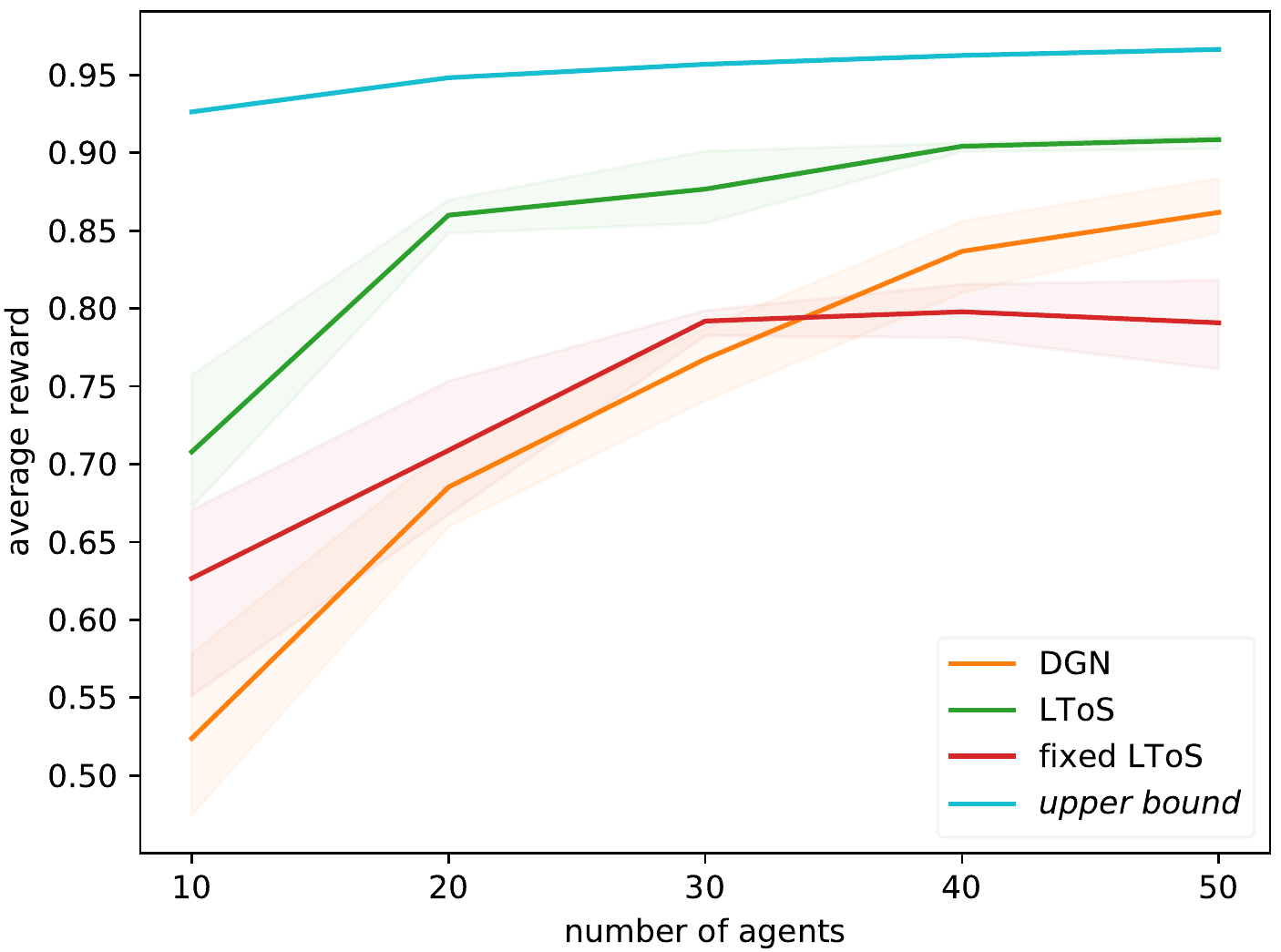}
				\label{fig:jungle-scalable}}
		\end{minipage}
		\caption{Average reward per step of methods in \textit{jungle} with different number of agents}
		\vspace{-2mm}
	\end{figure}
	
	Figure~\ref{fig:jungle-illustration} illustrates the representative behaviors of agents learned by difference methods. For NeurComm, ConseNet (and the same with LIO), most agents are not even close to the foods, so as to avoid being attacked. Even though, there are still conflict and sneak attack between agents sometimes. IP and DGN agents learn much better, but agents may still be cautious about each other, which leads to hesitation when they are near the same food. LToS agents learn to properly share the food even if the foods are close (\textit{i.e.}, agents are easy to be attacked) as depicted in Figure~\ref{fig:jungle-ltos} and demonstrate much better cooperation than the agents learned by other methods.
	The experimental results in \textit{jungle} verify that LToS can also adapts to considerably varying topology in networked MARL. 
	
	Besides, we compared LToS and ablation baselines (\textit{i.e.}, DGN and \textit{fixed} LToS) with different number of agents (\textit{i.e.}, from $10$ to $50$) to verify the scalability of LToS. All the setting remains the same except that the number of agents and food grows proportionally (\textit{i.e.}, $\nicefrac{\# agents}{\# foods} = \nicefrac{5}{3}$). As depicted in Figure~\ref{fig:jungle-scalable}, LToS can always achieve the best performance as the agent population size increases.

	\subsection{Traffic}
	
	\begin{wraptable}{r}{0.35\textwidth}
		\vspace{-.4cm}
		\setlength{\tabcolsep}{1pt}
		\setlength{\abovecaptionskip}{3pt}
		\renewcommand{\arraystretch}{0.9}
		\centering
		\caption{Statistics of traffic flows}
		\label{tab:traffic-flow}
		\begin{footnotesize}
			\begin{tabular}{@{}cc@{}}
				\toprule   
				\begin{tabular}[c]{@{}c@{}} Time\\(second)\end{tabular} & \begin{tabular}[c]{@{}c@{}} Arrival Rate\\ (vehicles/s)\end{tabular} \\
				\midrule   
				$0 - 600$ & $1$ \\
				$600 - 1,200$ & $1/4$ \\
				$1,200 - 1,800$ & $1/3$ \\
				$1,800 - 2,400$ & $2$ \\
				$2,400 - 3,000$ & $1/5$ \\
				$3,000 - 3,600$ & $1/2$ \\
				\bottomrule  
			\end{tabular}
		\end{footnotesize}
		\vspace{-0.3cm}
	\end{wraptable}
	
	
	In \textit{traffic}, as illustrated in Figure~\ref{fig:traffic-scenario} , we aim to investigate the capability of LToS in dealing with highly dynamic environment through reward sharing. We adopt the same problem setting as \citet{wei2019colight}. In a road network, each agent serves as traffic signal control at an intersection. The observation of an agent consists of a one-hot representation of its current phase (directions for red/green lights) and the number of vehicles on each incoming lane of the intersection. At each timestep, an agent chooses a phase from the pre-defined phase set for the next time interval, \textit{i.e.}, $10$ seconds. The reward is set to be the negative of the sum of the queue lengths of all approaching lanes at current timestep. The global objective is to minimize average 
	wait time of all vehicles in the road network, which is equivalent to minimizing the sum of queue lengths of all intersections over an episode \citep{zheng2019diagnosing}. The experiment was conducted on a traffic simulator, CityFlow \citep{zhang2019cityflow}. We use a $6\times6$ grid network with 36 intersections. 
	The traffic flows were generated to simulate dynamic traffic flows including both peak and off-peak period, and the statistics 
	are
	summarized in 
	Table~\ref{tab:traffic-flow}.
	
	\begin{table}[h]
		\renewcommand{\arraystretch}{1}
		\centering
		\vskip -0.2cm
		\caption{Average number of red lights one vehicle waits for at per intersection of methods in \textit{traffic}}
		\label{traffic performance}
		\setlength{\tabcolsep}{3.5pt}
		\begin{small}
			\begin{tabular}{c | c c c c c c c c c | c}
				\toprule   
				Network & DQN & DGN & \textit{fixed} LToS & \textbf{LToS} & NeurComm & ConseNet & LIO & IP & QMIX & \textit{lower bound} \\  
				\midrule
				$6\times6$ & 0.90 & 0.78 & 0.80 & \textbf{0.58} & 0.71 & 0.78 & 7.74 & 4.67 & 5.94 & 0.50 \\
				Shenzhen & 13.99 & 2.02 & 1.94 & \textbf{1.71} & 2.27 & 7.63 & 19.15 & 5.52 & 19.74 & 1.50 \\

				\bottomrule  
			\end{tabular}
		\end{small}
	\end{table}
	
	For better demonstration, we choose to show the normalized metric of wait time: the average number of red lights one vehicle waits for at per intersection, 
	and we can also give a loose lower bound $0.50$ to reflect our improvement. Figure~\ref{fig:traffic-curve} shows the learning curves of all the methods in terms of that in logarithmic form. 
	The performance after convergence is summarized in Table~\ref{traffic performance}, where LToS outperforms all other methods. LToS outperforms DGN, which demonstrates the reward sharing scheme learned by the high-level policy indeed helps to improve the cooperation of agents. Without the high-level policy, \textit{i.e.}, given fixed sharing weights, \textit{fixed} LToS does not perform well in dynamic environment. This indicates the necessity of the high-level policy. The performance of IP agents increases so slowly that they cannot converge efficiently. Although NeurComm and ConseNet both take advantage of RNN for partially observable environments, LToS still outperforms these methods, which verifies the great improvement of LToS in networked MARL. QMIX is confined to suboptimality \citep{mahajan2019maven}. As observed in the experiment, QMIX tries to release traffic flows from one direction while stopping flows from the other direction all the time, because this will only make two rows of intersections on the border blocked but keep most of the intersections from any traffic jam all the time. However, the global optimality actually does not need to be constructed on the sacrifice of anyone.
	Some similar thing happens to LIO. It is likely because LIO contains some sensitive parameters \citep{yang2020learning} and agents are hard to learn and coordinate their incentive functions since the original reward functions change acutely once an improper operation causes traffic congestion. An introduction of some explicit coordination mechanism may also alleviate the problem, like that of NeurComm and ConseNet. 
	
	\begin{figure*}[!t]
		\setlength{\abovecaptionskip}{5pt}
		\centering
		\begin{minipage}{0.27\textwidth}
			\setlength{\abovecaptionskip}{5pt}
			\centering
			\subfloat[][temporal pattern]{
				\includegraphics[width=1\textwidth]{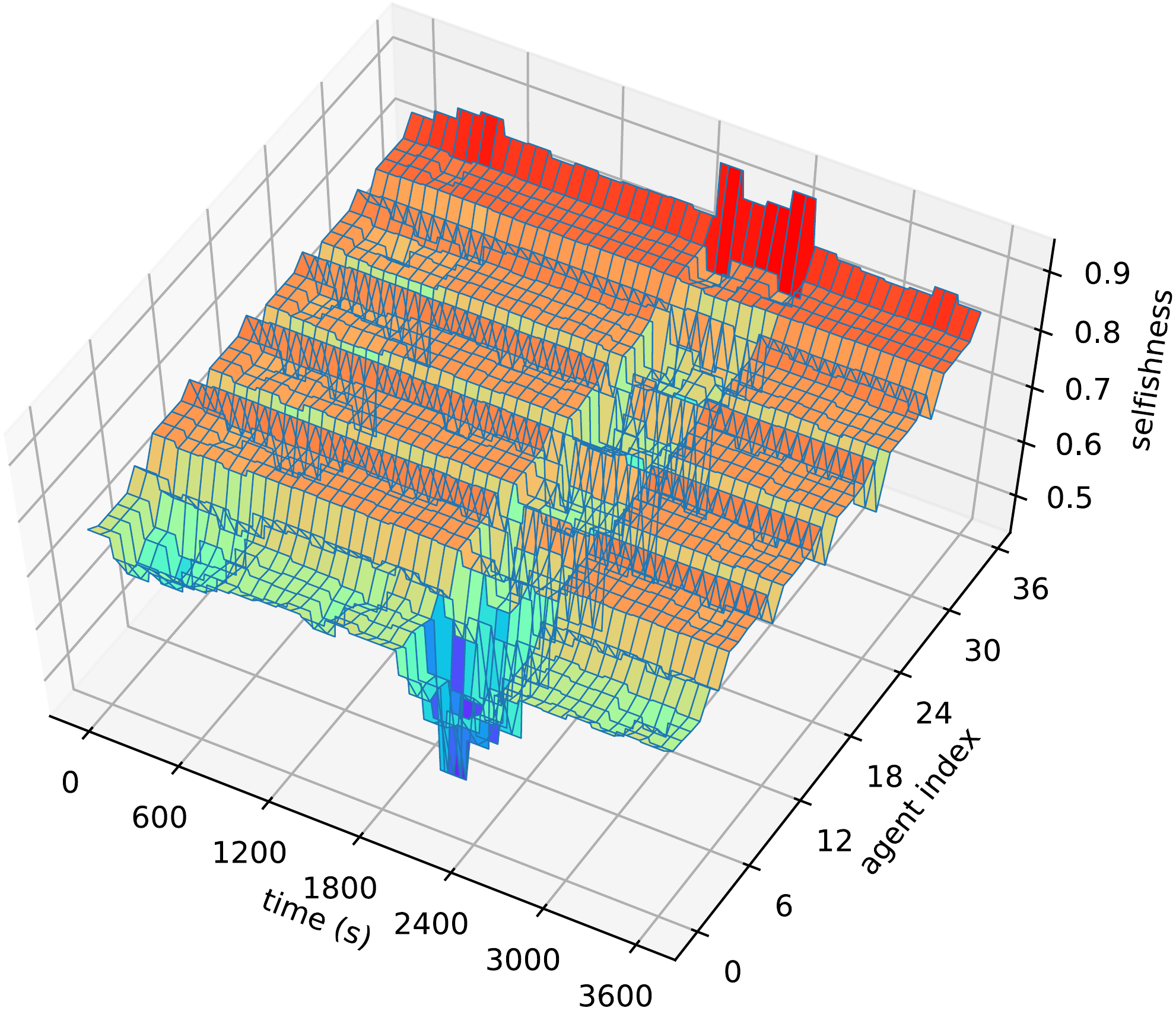}
				\label{fig:temporal-selfish}}
		\end{minipage}
		\hspace{0.4cm}
		\begin{minipage}{0.48\textwidth}
			\setlength{\abovecaptionskip}{5pt}
			\centering
			\subfloat[][spatial pattern ]{
				\includegraphics[width=0.9\textwidth]{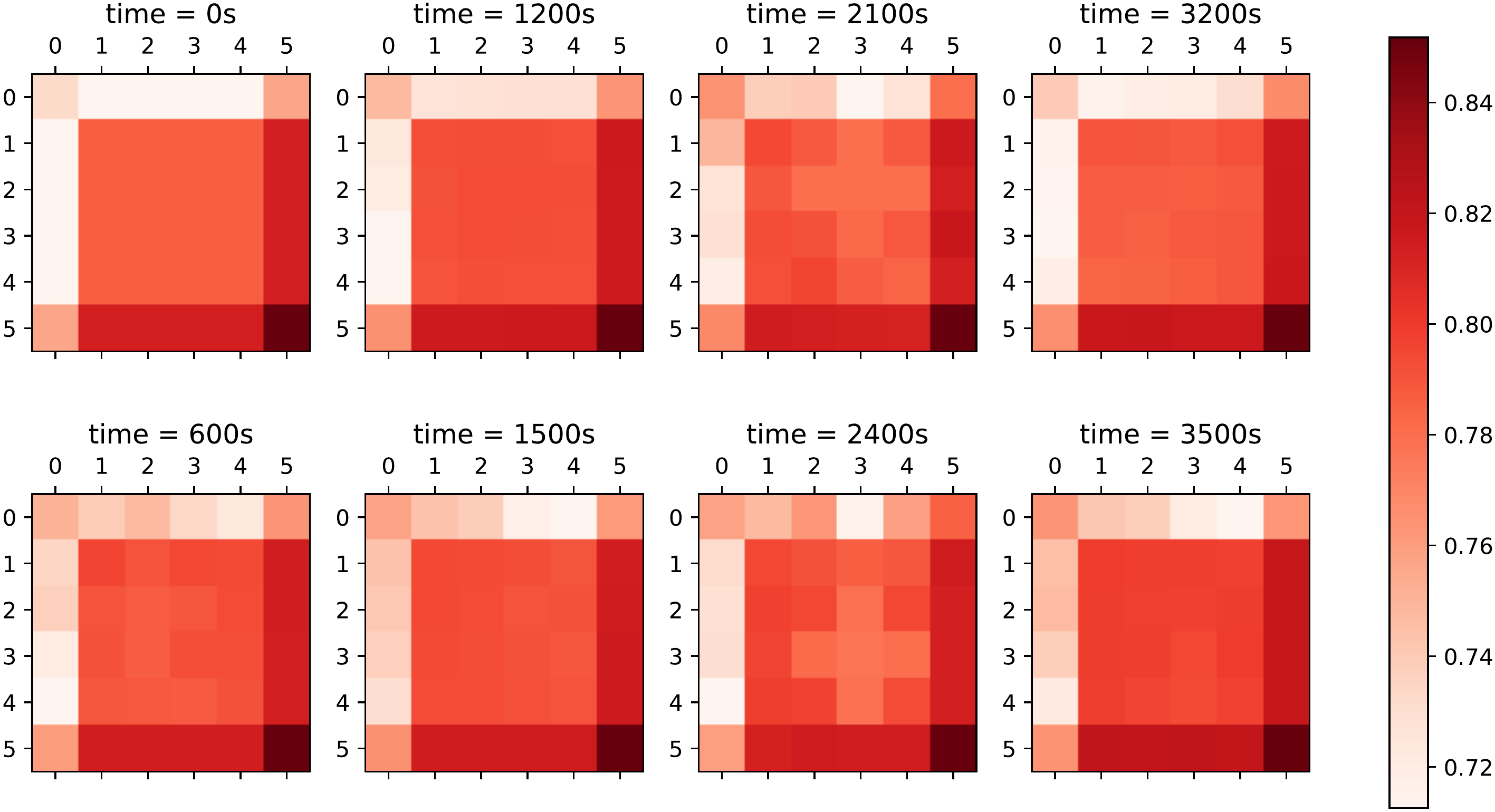}
				\label{fig:spatial-selfish}}
		\end{minipage}
		\caption{Patterns of \textit{selfishness} in \textit{traffic}.}
		\vspace{-0.4cm}
	\end{figure*}
	
	We visualize the variation of \textit{selfishness} of all agents during an episode in \textit{traffic} in Figure~\ref{fig:temporal-selfish} and \ref{fig:spatial-selfish}. Figure~\ref{fig:temporal-selfish} depicts the temporal variance of selfishness for each agent. For most agents, there are two valleys occurred exactly during two peak periods (\textit{i.e.}, $0-600$s and $1,800-2,400$s). This is because for heavy traffic agents need to cooperate more closely, which can be induced by being less selfish. We can see this from the fact that selfishness is even lower in the second valley where the traffic is even heavier (\textit{i.e.}, $2$ \textit{vs.} $1$ vehicles/s). Therefore, this demonstrates that the agents learn to adjust their extent of cooperation to deal with dynamic environment by controlling the sharing weights. Figure~\ref{fig:spatial-selfish} shows the spatial pattern of selfishness at different timesteps, where the distribution of agents is the same as the road network in Figure~\ref{fig:traffic-scenario}. The edge and inner agents tend to have very different selfishness. In addition, inner agents keep their selfishness more uniform during off-peak periods, while they diverge and present cross-like patterns during peak periods. 
	
	\begin{wrapfigure}{r}{0.3\textwidth}
		\setlength{\abovecaptionskip}{5pt}
		\vspace{-0.35cm}
		\centering
		\includegraphics[width=0.28\textwidth]{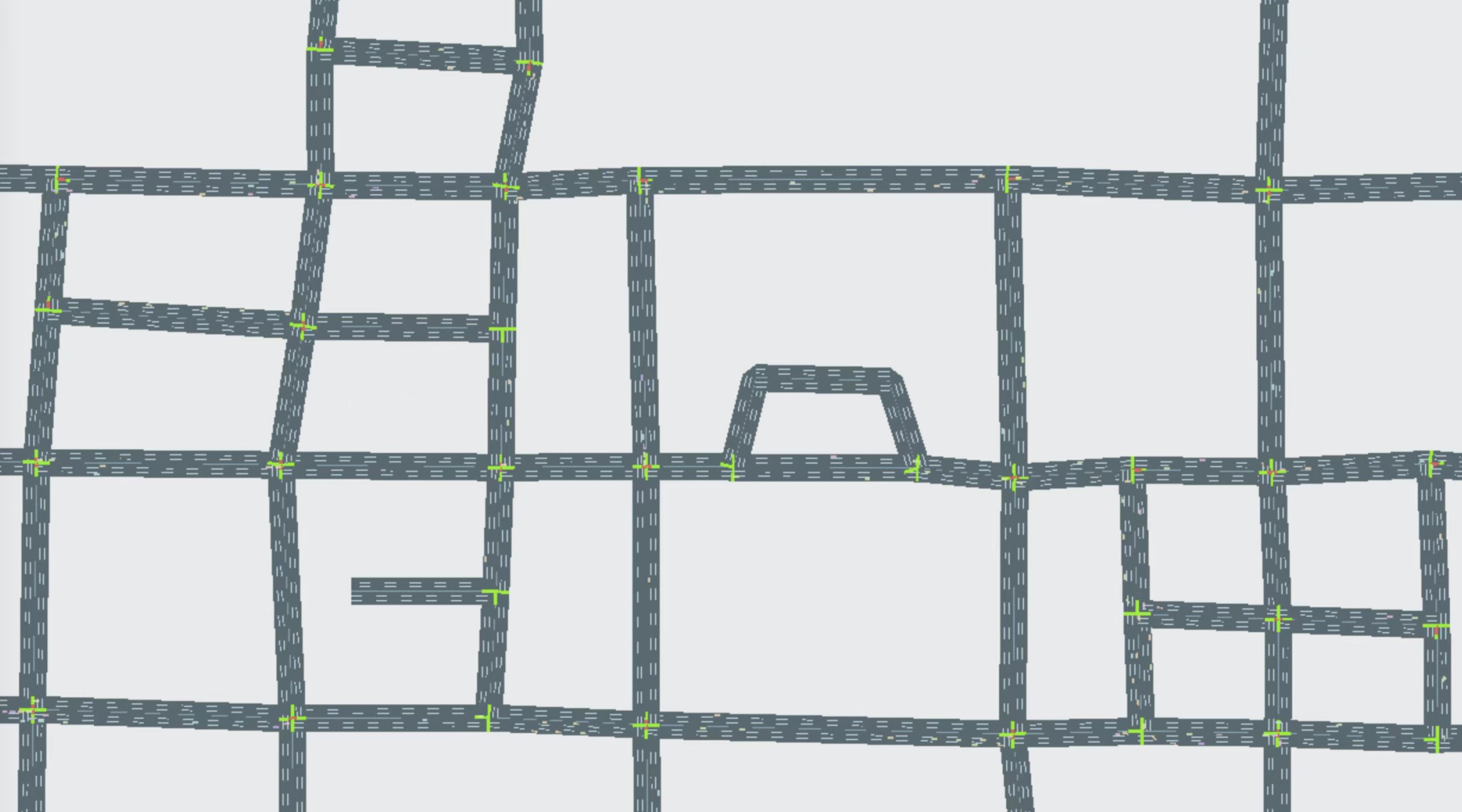}
		\caption{Shenzhen network}
		\label{fig:traffic-shenzhen}
		\vspace{-0.5cm}
	\end{wrapfigure}
	
	This shows that handling heavier traffic requires more diverse reward sharing schemes among agents to promote more sophisticated cooperation. 
	In addition, we use another network that is part of Shenzhen, China with 33 intersections, and real one-hour traffic flows \citep{RoadnetSZ}. 
	The performance is also summarized in Table~\ref{traffic performance}.
	The experimental results in \textit{traffic} verify that LToS can also handle highly dynamic environment in networked MARL.

	\section{Conclusion}
	In this paper, we proposed LToS, a hierarchically decentralized framework for networked MARL. LToS enables agents to share reward with neighbors so as to encourage agents to cooperate on the global objective
	through collectives.
	For each agent, the high-level policy learns how to share reward with neighbors to decompose the global objective, while the low-level policy learns to optimize the local objective induced by the high-level policies in the neighborhood. Experimentally, we demonstrate that LToS outperforms existing methods in both social dilemma and networked MARL scenario
	across scales.

	\bibliography{ref}

\begin{thebibliography}{34}
\providecommand{\natexlab}[1]{#1}
\providecommand{\url}[1]{\texttt{#1}}
\expandafter\ifx\csname urlstyle\endcsname\relax
  \providecommand{\doi}[1]{doi: #1}\else
  \providecommand{\doi}{doi: \begingroup \urlstyle{rm}\Url}\fi

\bibitem[Castellini et~al.(2019)Castellini, Oliehoek, Savani, and
  Whiteson]{castellini2019representational}
Jacopo Castellini, Frans~A Oliehoek, Rahul Savani, and Shimon Whiteson.
\newblock The representational capacity of action-value networks for
  multi-agent reinforcement learning.
\newblock In \emph{AAMAS}, 2019.

\bibitem[{Chu} et~al.(2020){Chu}, {Chinchali}, and {Katti}]{chu2020multi}
Tianshu {Chu}, Sandeep {Chinchali}, and Sachin {Katti}.
\newblock Multi-agent reinforcement learning for networked system control.
\newblock In \emph{ICLR}, 2020.

\bibitem[Eisenberg and Mussen(1989)]{eisenberg1989roots}
Nancy Eisenberg and Paul~Henry Mussen.
\newblock \emph{The roots of prosocial behavior in children}.
\newblock Cambridge University Press, 1989.

\bibitem[Finn et~al.(2017)Finn, Abbeel, and Levine]{finn2017modelagnostic}
Chelsea Finn, Pieter Abbeel, and Sergey Levine.
\newblock Model-agnostic meta-learning for fast adaptation of deep networks.
\newblock In \emph{ICML}, 2017.

\bibitem[Foerster et~al.(2018)Foerster, Farquhar, Afouras, Nardelli, and
  Whiteson]{foerster2018counterfactual}
Jakob Foerster, Gregory Farquhar, Triantafyllos Afouras, Nantas Nardelli, and
  Shimon Whiteson.
\newblock Counterfactual multi-agent policy gradients.
\newblock In \emph{AAAI}, 2018.

\bibitem[Hammersley and Clifford(1971)]{Hammersley1971MarkovFO}
J.~M. Hammersley and P.~Clifford.
\newblock Markov fields on finite graphs and lattices.
\newblock \emph{unpublished manuscript}, 1971.

\bibitem[{Hostallero} et~al.(2020){Hostallero}, {Kim}, {Moon}, {Son}, {Kang},
  and {Yi}]{hostallero2020inducing}
David~Earl {Hostallero}, Daewoo {Kim}, Sangwoo {Moon}, Kyunghwan {Son}, Wan~Ju
  {Kang}, and Yung {Yi}.
\newblock Inducing cooperation through reward reshaping based on peer
  evaluations in deep multi-agent reinforcement learning.
\newblock In \emph{AAMAS}, 2020.

\bibitem[{Hughes} et~al.(2018){Hughes}, {Leibo}, {Phillips}, {Tuyls},
  {Duéñez-Guzmán}, {Castañeda}, {Dunning}, {Zhu}, {McKee}, {Koster},
  {Roff}, and {Graepel}]{hughes2018inequity}
Edward {Hughes}, Joel~Z. {Leibo}, Matthew~G. {Phillips}, Karl {Tuyls}, Edgar~A.
  {Duéñez-Guzmán}, Antonio~García {Castañeda}, Iain {Dunning}, Tina {Zhu},
  Kevin~R. {McKee}, Raphael {Koster}, Heather {Roff}, and Thore {Graepel}.
\newblock Inequity aversion improves cooperation in intertemporal social
  dilemmas.
\newblock In \emph{NeurIPS}, 2018.

\bibitem[Jiang et~al.(2020)Jiang, Dun, Huang, and Lu]{jiang2020graph}
Jiechuan Jiang, Chen Dun, Tiejun Huang, and Zongqing Lu.
\newblock Graph convolutional reinforcement learning.
\newblock In \emph{ICLR}, 2020.

\bibitem[{Kalai} and {Kalai}(2010)]{kalai2010cooperation}
Adam~Tauman {Kalai} and Ehud {Kalai}.
\newblock Cooperation and competition in strategic games with private
  information.
\newblock In \emph{EC}, 2010.

\bibitem[Kim et~al.(2019)Kim, Moon, Hostallero, Kang, Lee, Son, and
  Yi]{kim2018learning}
Daewoo Kim, Sangwoo Moon, David Hostallero, Wan~Ju Kang, Taeyoung Lee,
  Kyunghwan Son, and Yung Yi.
\newblock Learning to schedule communication in multi-agent reinforcement
  learning.
\newblock In \emph{ICLR}, 2019.

\bibitem[{Lillicrap} et~al.(2016){Lillicrap}, {Hunt}, {Pritzel}, {Heess},
  {Erez}, {Tassa}, {Silver}, and {Wierstra}]{lillicrap2016continuous}
Timothy~P. {Lillicrap}, Jonathan~J. {Hunt}, Alexander {Pritzel}, Nicolas
  {Heess}, Tom {Erez}, Yuval {Tassa}, David {Silver}, and Daan {Wierstra}.
\newblock Continuous control with deep reinforcement learning.
\newblock In \emph{ICLR}, 2016.

\bibitem[Lin et~al.(2020)Lin, Qu, Huang, and Wierman]{lin2020multiagent}
Yiheng Lin, Guannan Qu, Longbo Huang, and Adam Wierman.
\newblock Multi-agent reinforcement learning in time-varying networked systems,
  2020.

\bibitem[{Liu} et~al.(2019){Liu}, {Simonyan}, and {Yang}]{liu2019darts}
Hanxiao {Liu}, Karen {Simonyan}, and Yiming {Yang}.
\newblock Darts: Differentiable architecture search.
\newblock In \emph{ICLR}, 2019.

\bibitem[{Lupu} and {Precup}(2020)]{lupu2020gifting}
Andrei {Lupu} and Doina {Precup}.
\newblock Gifting in multi-agent reinforcement learning.
\newblock In \emph{AAMAS}, 2020.

\bibitem[{Mahajan} et~al.(2019){Mahajan}, {Rashid}, {Samvelyan}, and
  {Whiteson}]{mahajan2019maven}
Anuj {Mahajan}, Tabish {Rashid}, Mikayel {Samvelyan}, and Shimon {Whiteson}.
\newblock Maven: Multi-agent variational exploration.
\newblock In \emph{NeurIPS}, 2019.

\bibitem[{Mguni} et~al.(2019){Mguni}, {Jennings}, {Sison}, {Macua}, {Ceppi},
  and de~{Cote}]{mguni2019coordinating}
David {Mguni}, Joel {Jennings}, Emilio {Sison}, Sergio~Valcarcel {Macua}, Sofia
  {Ceppi}, and Enrique~Munoz de~{Cote}.
\newblock Coordinating the crowd: Inducing desirable equilibria in
  non-cooperative systems.
\newblock In \emph{AAMAS}, 2019.

\bibitem[P\'{e}rolat et~al.(2017)P\'{e}rolat, Leibo, Zambaldi, Beattie, Tuyls,
  and Graepel]{perolat2017multi}
Julien P\'{e}rolat, Joel~Z Leibo, Vinicius Zambaldi, Charles Beattie, Karl
  Tuyls, and Thore Graepel.
\newblock A multi-agent reinforcement learning model of common-pool resource
  appropriation.
\newblock In \emph{NeurIPS}, 2017.

\bibitem[Qu et~al.(2019)Qu, Mannor, Xu, Qi, Song, and Xiong]{qu2019value}
Chao Qu, Shie Mannor, Huan Xu, Yuan Qi, Le~Song, and Junwu Xiong.
\newblock Value propagation for decentralized networked deep multi-agent
  reinforcement learning.
\newblock In \emph{NeurIPS}, 2019.

\bibitem[{Qu} et~al.(2020{\natexlab{a}}){Qu}, {Li}, {Liu}, {Xiong}, {Zhang},
  {Chu}, {Qi}, and {Song}]{qu2020intention}
Chao {Qu}, Hui {Li}, Chang {Liu}, Junwu {Xiong}, James {Zhang}, Wei {Chu}, Yuan
  {Qi}, and Le~{Song}.
\newblock Intention propagation for multi-agent reinforcement learning.
\newblock \emph{arXiv:2004.08883}, 2020{\natexlab{a}}.

\bibitem[{Qu} et~al.(2020{\natexlab{b}}){Qu}, {Lin}, {Wierman}, and
  {Li}]{qu2019scalable2}
Guannan {Qu}, Yiheng {Lin}, Adam {Wierman}, and Na~{Li}.
\newblock Scalable multi-agent reinforcement learning for networked systems
  with average reward.
\newblock In \emph{NeurIPS}, 2020{\natexlab{b}}.

\bibitem[{Rashid} et~al.(2018){Rashid}, {Samvelyan}, {Schroeder}, {Farquhar},
  {Foerster}, and {Whiteson}]{rashid2018qmix}
Tabish {Rashid}, Mikayel {Samvelyan}, Christian {Schroeder}, Gregory
  {Farquhar}, Jakob {Foerster}, and Shimon {Whiteson}.
\newblock Qmix: Monotonic value function factorisation for deep multi-agent
  reinforcement learning.
\newblock In \emph{ICML}, 2018.

\bibitem[{Sodomka} et~al.(2013){Sodomka}, {Hilliard}, {Littman}, and
  {Greenwald}]{sodomka2013coco}
Eric {Sodomka}, Elizabeth {Hilliard}, Michael {Littman}, and Amy {Greenwald}.
\newblock Coco-q: Learning in stochastic games with side payments.
\newblock In \emph{ICML}, 2013.

\bibitem[Su and Lu(2022)]{su2022dmac}
Kefan Su and Zongqing Lu.
\newblock Divergence-regularized multi-agent actor-critic.
\newblock In \emph{ICML}, 2022.

\bibitem[Wang et~al.(2021)Wang, Ren, Liu, Yu, and Zhang]{wang2021qplex}
Jianhao Wang, Zhizhou Ren, Terry Liu, Yang Yu, and Chongjie Zhang.
\newblock Qplex: Duplex dueling multi-agent q-learning.
\newblock In \emph{ICLR}, 2021.

\bibitem[{Wang} et~al.(2020){Wang}, {Zhang}, {Kim}, and {Gu}]{wang2020shapley}
Jianhong {Wang}, Yuan {Zhang}, Tae-Kyun {Kim}, and Yunjie {Gu}.
\newblock Shapley q-value: A local reward approach to solve global reward
  games.
\newblock In \emph{AAAI}, 2020.

\bibitem[{Wei} et~al.(2019){Wei}, {Xu}, {Zhang}, {Zheng}, {Zang}, {Chen},
  {Zhang}, {Zhu}, {Xu}, and {Li}]{wei2019colight}
Hua {Wei}, Nan {Xu}, Huichu {Zhang}, Guanjie {Zheng}, Xinshi {Zang}, Chacha
  {Chen}, Weinan {Zhang}, Yanmin {Zhu}, Kai {Xu}, and Zhenhui {Li}.
\newblock Colight: Learning network-level cooperation for traffic signal
  control.
\newblock In \emph{CIKM}, 2019.

\bibitem[Xu et~al.(2022)Xu, Zhu, Yi, Lu, and other contributors]{RoadnetSZ}
Bingyu Xu, Liwen Zhu, Yuxuan Yi, Zongqing Lu, and other contributors.
\newblock Roadnetsz.
\newblock \url{https://github.com/zhuliwen/PKU_Traffic_Lights}, 2022.
\newblock Accessed: 2022-05-01.

\bibitem[{Yang} et~al.(2020){Yang}, {Li}, {Farajtabar}, {Sunehag}, {Hughes},
  and {Zha}]{yang2020learning}
Jiachen {Yang}, Ang {Li}, Mehrdad {Farajtabar}, Peter {Sunehag}, Edward
  {Hughes}, and Hongyuan {Zha}.
\newblock Learning to incentivize other learning agents.
\newblock In \emph{NeurIPS}, 2020.

\bibitem[{Zhang} et~al.(2019){Zhang}, {Feng}, {Liu}, {Ding}, {Zhu}, {Zhou},
  {Zhang}, {Yu}, {Jin}, and {Li}]{zhang2019cityflow}
Huichu {Zhang}, Siyuan {Feng}, Chang {Liu}, Yaoyao {Ding}, Yichen {Zhu}, Zihan
  {Zhou}, Weinan {Zhang}, Yong {Yu}, Haiming {Jin}, and Zhenhui {Li}.
\newblock Cityflow: A multi-agent reinforcement learning environment for large
  scale city traffic scenario.
\newblock In \emph{WWW}, 2019.

\bibitem[{Zhang} et~al.(2018){Zhang}, {Yang}, {Liu}, {Zhang}, and
  {Ba$\c{s}$ar.}]{zhang2018fully}
Kaiqing {Zhang}, Zhuoran {Yang}, Han {Liu}, Tong {Zhang}, and Tamer
  {Ba$\c{s}$ar.}
\newblock Fully decentralized multi-agent reinforcement learning with networked
  agents.
\newblock In \emph{ICML}, 2018.

\bibitem[Zhang et~al.(2021)Zhang, Li, Wang, Xie, and Lu]{zhang2021fop}
Tianhao Zhang, Yueheng Li, Chen Wang, Guangming Xie, and Zongqing Lu.
\newblock Fop: Factorizing optimal joint policy of maximum-entropy multi-agent
  reinforcement learning.
\newblock In \emph{ICML}, 2021.

\bibitem[{Zheng} et~al.(2019){Zheng}, {Zang}, {Xu}, {Wei}, {Yu}, {Gayah}, {Xu},
  and {Li}]{zheng2019diagnosing}
Guanjie {Zheng}, Xinshi {Zang}, Nan {Xu}, Hua {Wei}, Zhengyao {Yu}, Vikash
  {Gayah}, Kai {Xu}, and Zhenhui {Li}.
\newblock Diagnosing reinforcement learning for traffic signal control.
\newblock \emph{arXiv:1905.04716}, 2019.

\bibitem[Zheng et~al.(2017)Zheng, Yang, Cai, Zhang, Wang, and
  Yu]{zheng2017magent}
Lianmin Zheng, Jiacheng Yang, Han Cai, Weinan Zhang, Jun Wang, and Yong Yu.
\newblock Magent: A many-agent reinforcement learning platform for artificial
  collective intelligence, 2017.

\end{thebibliography}
	\bibliographystyle{plainnat}

	\clearpage
	\appendix

	\section{Proofs}
	\label{app:proofs}
	
	\subsection{Proof of Proposition \ref{prop:1}}
	\textbf{Proposition \ref{prop:1}.} 
	\textit{Given $\boldsymbol{\pi}$, $V_{\mathcal{V}}^{\boldsymbol{\phi}}(s;\boldsymbol{\pi})$ and $Q_{\mathcal{V}}^{\boldsymbol{\phi}}(s, \boldsymbol{w};\boldsymbol{\pi})$ are respectively the value function and action-value function of $\boldsymbol{\phi}$.}
	
	\label{proof:1}
	\begin{proof}
		Let $r_{i}^{\boldsymbol{\phi}} \doteq \sum_{\boldsymbol{a}} \boldsymbol{\pi}(\boldsymbol{a}|s,\boldsymbol{w}) r_{i}^{\boldsymbol{w}}$ and $p_{{w}}(s^\prime|s,\boldsymbol{w}) \doteq \sum_{\boldsymbol{a}} \boldsymbol{\pi}(\boldsymbol{a}|s,\boldsymbol{w}) p_a(s^\prime|s,\boldsymbol{a})$
		in contrast with $p_{{a}}$
		. As commonly assumed the reward is deterministic given $s$ and $\boldsymbol{a}$, from (\ref{v_i_phi}), we have,
		\begin{align}
			v_{i}^{\boldsymbol{\pi}}(s;\boldsymbol{\phi}) &= \sum_{\boldsymbol{w}} \boldsymbol{\phi}(\boldsymbol{w}|s) \sum_{\boldsymbol{a}}  \boldsymbol{\pi}(\boldsymbol{a}|s,\boldsymbol{w}) [r_{i}^{\boldsymbol{w}} + \sum_{s^\prime} p_a(s^\prime|s,\boldsymbol{a}) \gamma v_{i}^{\boldsymbol{\pi}}(s^\prime;\boldsymbol{\phi})] \notag \\ \label{v_i}
			&= \sum_{\boldsymbol{w}} \boldsymbol{\phi}(\boldsymbol{w}|s) \sum_{s^\prime}p_{{w}}(s^\prime|s,\boldsymbol{w}) [r_{i}^{\boldsymbol{\phi}}+\gamma v_{i}^{\boldsymbol{\pi}}(s^\prime;\boldsymbol{\phi})],
		\end{align}
		where $p_{{w}} \in \mathcal{P}_{{w}}: \mathcal{S \times W \times S} \rightarrow [0,1]$ describes the state transitions given $\boldsymbol{\pi}$. 
		
		Let $r_{\mathcal{V}}^{\boldsymbol{\phi}} \doteq \sum_{i \in \mathcal{V}} r_{i}^{\boldsymbol{\phi}}$, and from (\ref{v_i}) we have 
		\begin{align*}
			V_{\mathcal{V}}^{\boldsymbol{\phi}}(s;\boldsymbol{\pi}) &= \sum_{i \in \mathcal{V}} \sum_{\boldsymbol{w}} \boldsymbol{\phi}(\boldsymbol{w}|s) \sum_{s^\prime} p_{{w}}(s^\prime|s,\boldsymbol{w}) [r_{i}^{\boldsymbol{\phi}} + \gamma v_{i}^{\boldsymbol{\pi}}(s^\prime;\boldsymbol{\phi})] \notag \\
			&= \sum_{\boldsymbol{w}} \boldsymbol{\phi}(\boldsymbol{w}|s) \sum_{s^\prime} p_{{w}}(s^\prime|s,\boldsymbol{w})[\sum_{i \in \mathcal{V}} r_{i}^{\boldsymbol{\phi}} + \gamma \sum_{i \in \mathcal{V}} v_{i}^{\boldsymbol{\pi}}(s^\prime;\boldsymbol{\phi})] \notag \\
			&= \sum_{\boldsymbol{w}} \boldsymbol{\phi}(\boldsymbol{w}|s) \sum_{s^\prime} p_{{w}}(s^\prime|s,\boldsymbol{w}) [ r_{\mathcal{V}}^{\boldsymbol{\phi}} + \gamma  V_{\mathcal{V}}^{\boldsymbol{\phi}}(s^\prime; \boldsymbol{\pi})],
		\end{align*}
		and similarly,
		\begin{align*}
			Q_{\mathcal{V}}^{\boldsymbol{\phi}}(s, \boldsymbol{w};\boldsymbol{\pi}) &= \sum_{i \in \mathcal{V}} \sum_{s^\prime} p_{{w}}(s^\prime|s,\boldsymbol{w})[r_{i}^{\boldsymbol{\phi}} + \gamma \sum_{\boldsymbol{w}^\prime} \phi(\boldsymbol{w}^\prime|s^\prime) v_{i}^{\boldsymbol{\pi}}(s^\prime;\boldsymbol{w}^\prime,\boldsymbol{\phi})] \notag \\
			&  = \sum_{s^\prime} p_{{w}}(s^\prime|s,\boldsymbol{w}) [r_{\mathcal{V}}^{\boldsymbol{\phi}} + \gamma \sum_{\boldsymbol{w}^\prime} \boldsymbol{\phi}(\boldsymbol{w}^\prime|s^\prime) Q_{\mathcal{V}}^{\boldsymbol{\phi}}(s^\prime, \boldsymbol{w}^\prime; \boldsymbol{\pi})].
		\end{align*}
		Moreover, from the definitions of $r_{i}^{\boldsymbol{w}}$ and $r_{i}^{\boldsymbol{\phi}}$ we have
		\begin{align*}
			r_{\mathcal{V}}^{\boldsymbol{\phi}} & 
			= \sum_{\boldsymbol{a}} \boldsymbol{\pi}(\boldsymbol{a}|s,\boldsymbol{w}) \sum_{i \in \mathcal{V}}  r_{i}^{\boldsymbol{w}} \\
			& = \sum_{\boldsymbol{a}} \boldsymbol{\pi}(\boldsymbol{a}|s,\boldsymbol{w}) \sum_{i \in \mathcal{V}}\sum_{j \in \mathcal{N}_{i}} w_{ji} r_{j} \\
			&= \sum_{\boldsymbol{a}} \boldsymbol{\pi}(\boldsymbol{a}|s,\boldsymbol{w}) \sum_{(i,j) \in \mathcal{D}} w_{ij} r_i 
			= \sum_{\boldsymbol{a}} \boldsymbol{\pi}(\boldsymbol{a}|s,\boldsymbol{w}) \sum_{i \in \mathcal{V}} r_i.
		\end{align*}
		Thus, given $\boldsymbol{\pi}$, $V_{\mathcal{V}}^{\boldsymbol{\phi}}(s)$ and $Q_{\mathcal{V}}^{\boldsymbol{\phi}}(s,\boldsymbol{w})$ are respectively the value function and action-value function of $\boldsymbol{\phi}$ in terms of the sum of expected cumulative rewards of all agents, \textit{i.e.}, the global objective. 
	\end{proof}
	
	\subsection{Proof of Proposition \ref{prop:3}}
	\textbf{Proposition \ref{prop:3}.} 
	\textit{The joint high level policy $\boldsymbol{\phi}$ can be learned in a decentralized manner, and the decentralized high-level policies of all agents form a mean-field approximation of $\boldsymbol{\phi}$.}
	
	\label{proof:3}
	
	First, we introduce one definition and one lemma.
	
	\begin{definition}[\textbf{Markov Random Field}]
		\label{definition:1}
		A Markov Random Field (MRF) is a graph $\mathcal{G = (V, E)}$ that satisfies:
		\begin{equation}
			P(X_i|\{X_j\}_{j \in \mathcal{V} \backslash \{i\}} ) =
			P(X_i|\{X_j\}_{j \in \mathcal{N}_i} )
		\end{equation}
	\end{definition}
	where $X_i$ is some random variable associated with node $i, \forall i \in \mathcal{V}$.
	
	\begin{lemma}[\textbf{Hammersley–Clifford Theorem}]
		\label{lemma:1}
		A probability distribution that has a strictly positive mass or density satisfies one of the Markov properties with respect to an undirected graph $\mathcal{G}$ if and only if it is a Gibbs random field, i.e., its density can be factorized over the cliques (or complete subgraphs) of the graph. \citep{Hammersley1971MarkovFO}
	\end{lemma}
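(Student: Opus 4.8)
The plan is to prove both directions of the equivalence, treating the Gibbs-implies-Markov direction as routine and concentrating the effort on the converse, where the strict positivity hypothesis does the real work. Throughout I write $x = (x_1, \dots, x_N)$ for a joint configuration over the nodes $\mathcal{V}$, fix once and for all a reference configuration $x^\ast$ (legitimate precisely because strict positivity makes every configuration admissible), and for any $A \subseteq \mathcal{V}$ let $x^A$ denote the configuration agreeing with $x$ on $A$ and with $x^\ast$ off $A$. Since the density is strictly positive, $H(x) \doteq \log P(x)$ is finite everywhere, so every log-ratio and Möbius sum below is well defined.

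For the easy direction (Gibbs implies Markov), I would suppose $P(x) = \frac{1}{Z}\prod_{C} \psi_C(x_C)$ with the product over the cliques $C$ of $\mathcal{G}$. To verify the local Markov property at a node $i$, I form the conditional $P(X_i \mid \{X_j\}_{j \neq i})$ by dividing $P$ by its marginal over $x_i$; every clique factor not containing $i$ is constant in $x_i$ and cancels between numerator and denominator, while every clique that does contain $i$ involves only $i$ together with neighbors of $i$. Hence the conditional depends on $\{X_j\}_{j \neq i}$ only through $\{X_j\}_{j \in \mathcal{N}_i}$, which is exactly Definition~\ref{definition:1}.

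The substance is the converse (Markov implies Gibbs), for which I would use Möbius inversion. Define, for each $A \subseteq \mathcal{V}$, the potential $V_A(x) \doteq \sum_{B \subseteq A} (-1)^{|A \setminus B|} H(x^B)$; inclusion-exclusion then gives $H(x) = \sum_{A \subseteq \mathcal{V}} V_A(x)$, and each $V_A$ depends on $x$ only through $x_A$. The whole theorem reduces to the claim that $V_A \equiv 0$ whenever $A$ is not a clique. I would fix such an $A$, pick two non-adjacent vertices $i, j \in A$, and regroup the defining sum by writing $B = S \cup T$ with $S \subseteq \{i,j\}$ and $T \subseteq A \setminus \{i,j\}$. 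The inner alternating sum over $S$ collapses each summand to the log-ratio $\log\!\big(P(x^{\{i,j\}\cup T})\,P(x^{T}) / (P(x^{\{i\}\cup T})\,P(x^{\{j\}\cup T}))\big)$, so it suffices to establish
\[
\frac{P(x^{\{i,j\}\cup T})}{P(x^{\{i\}\cup T})} = \frac{P(x^{\{j\}\cup T})}{P(x^{T})}.
\]
Each side is a conditional odds for $X_j$ (value $x_j$ versus $x_j^\ast$) given the remaining coordinates; the two conditioning configurations differ only in coordinate $i$. By the Markov property this conditional depends only on $X_{\mathcal{N}_j}$, and since $i \notin \mathcal{N}_j$ the two sides are computed under identical neighbor configurations and therefore coincide. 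Thus every inner sum vanishes, $V_A \equiv 0$ for non-cliques, and $H(x) = \sum_{C \text{ clique}} V_C(x_C)$ exponentiates to the desired Gibbs factorization.

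The main obstacle is this converse, and within it the clique-vanishing step: one must set up the Möbius/reference-configuration bookkeeping correctly and recognize that non-adjacency of $i$ and $j$ is exactly what forces the two conditional odds to agree. Strict positivity is indispensable here, since it guarantees $P(x^B) > 0$ for every $B$ so that $H = \log P$ is finite and all ratios make sense; without it the inversion identity can fail, as the classical counterexamples to Hammersley--Clifford show.
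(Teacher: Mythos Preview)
Your proof is correct; it is essentially the classical Möbius-inversion argument for the Hammersley--Clifford theorem, with the Gibbs-to-Markov direction handled by cancellation of non-$i$ clique factors and the converse handled by showing the Möbius potentials $V_A$ vanish on non-cliques via the conditional-odds identity you wrote down.

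There is, however, nothing in the paper to compare it against: the paper does not prove Lemma~\ref{lemma:1} at all. It simply quotes the Hammersley--Clifford theorem as a known result with a citation and then invokes it inside the proof of Proposition~\ref{prop:3}. So your write-up goes well beyond what the paper does for this statement; the paper treats it as a black box, whereas you have supplied a full (and standard) proof.
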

	
	
	Now we begin the proof of Proposition \ref{prop:3}.
	
	\begin{proof}
		Let $d_{ij} \in \mathcal{D}$ serve as a vertex with action $w_{ij}$ and reward $w_{ij} r_i$ in a new graph $\mathcal{G}^\prime$. Each vertex has its own local policy $\phi_{ij}(w_{ij}|s)$.
		Note that in the sense of mean-field approximation, we focus on neighbors and find a MRF: each $w_{ij}$ needs and only needs to
		be determined considering other $\{w_{ik} | {k \in \mathcal{N}_i \backslash \{j\}} \}$, because their actions are subject to the constraint $\sum_{j \in \mathcal{N}_{i}} w_{ij} = 1$. It accords with the adjacency relationship in $\mathcal{G^\prime}$.
		According to Lemma \ref{lemma:1}, it is also a Gibbs random field.
		
		Now we consider the cliques that we factor $\boldsymbol{\phi}(\boldsymbol{w}|s)$ over. For $\forall i \in \mathcal{V}$, $\{d_{ij}|j \in \mathcal{N}_{i}\}$ should form a complete subgraph in $\mathcal{G}^\prime$.
		Note that $d_{ij} \in \mathcal{G}^\prime$ connects to $\{d_{ik}|k \in \mathcal{N}_{i}\backslash\{j\}\}$ and $\{d_{kj}|k \in \mathcal{N}_{j}\backslash\{i\}\}$, but only the former will form the maximal clique.
		Therefore, we have $\boldsymbol{\phi}(\boldsymbol{w}|s) \approx \prod_{i \in \mathcal{V}} \phi_i(w_{i}^{\text{out}}|s)$.
		Note that technically each agent $i$ can determine $\{w_{ij} | j \in \mathcal{N}_{i}\}$ simultaneously. We allow agent $i$ to take charge of $\phi_i(w_{i}^{\text{out}}|s)$ as its high-level policy which is a joint policy of the complete subgraph in $\mathcal{G^\prime}$, so that we can turn the view back to $\mathcal{G}$ from $\mathcal{G^\prime}$ and verify each agent's independence in the high level. 
		
		Besides, from Proposition \ref{prop:1}, we approximately have:
		$
		q_{i}^{\phi_i}(s,w_{i}^{\text{out}};\pi_{\mathcal{N}_{i}}) = v_{i}^{\pi_i}(s;w_{i}^{\text{in}},\phi_{\mathcal{N}_{i}})
		$,
		where $q_{i}^{\phi_i}$ is the action-value function of $\phi_i$ given $\pi_{\mathcal{N}_{i}}$, $v_{i}^{\pi_i}$ is the value function of $\pi_i$ given $\phi_{\mathcal{N}_{i}}$ and conditioned on $w_i^\text{in}$.
		Let $Q_{\mathcal{N}_i}^{\boldsymbol{\phi}}(s, \boldsymbol{w};\boldsymbol{\pi}) \doteq \sum_{j \in \mathcal{N}_i} v_{j}^{\boldsymbol{\pi}}(s; w_j^{\text{in}}, \boldsymbol{\phi})$. Note that $\phi_i$ optimizes $Q_{\mathcal{N}_i}^{\boldsymbol{\phi}}(s, \boldsymbol{w};\boldsymbol{\pi})$, 
		because only elements in $\{v_{j}^{\boldsymbol{\pi}}(s; w_j^{\text{in}}, \boldsymbol{\phi}) | {j \in \mathcal{N}_i }\}$ correlate with $w_i^{\text{out}}$, 
		while those in $\{v_{j}^{\boldsymbol{\pi}}(s; w_j^{\text{in}}, \boldsymbol{\phi}) | {j \in \mathcal{V} \backslash \mathcal{N}_i }\}$ do not.
		After taking this uncorrelated set into account, we have an equivalent optimization of $Q_{\mathcal{V}}^{\boldsymbol{\phi}}(s, \boldsymbol{w};\boldsymbol{\pi})$, \textit{i.e.}, the global objective.
		Therefore, each decentralized high-level policy shares the same optimization objective as the global one, and we can factorize $J_{\boldsymbol{\phi}}$ into $\{ J_{\phi_i} | {i \in \mathcal{V}}\}$.
	\end{proof}
	
	This proposition gives a factorization which is different from existing studies. First, our factorization differs from $\pi(\boldsymbol{a}|s) = \Pi_{i=1}^N \pi(a_i|s)$ \citep{zhang2018fully}, since each agent needs to make decisions considering other agents’ plan. Also in contrast to \cite{qu2020intention}, they parameterize intention propagation by GNN and other neural networks to factorize the joint policy thoroughly, while we accept incomplete factorization and group indecomposable cliques by each agent to form high-level policies that are also decentralized and independent of each other. 

	\section{Algorithm}
	\label{algorithm}
	
	We describe LToS as Algorithm \ref{alg:ltos}. The code is available at \url{https://github.com/PKU-AI-Edge/RoadnetSZ}.
	\begin{algorithm}[htbp]
		\caption{LToS}
		\label{alg:ltos}
		\begin{algorithmic}[1]
			\State Initialize $\phi_i$ parameterized by $\theta_i$ and $\pi_i$ parameterized by $\mu_i$ for each agent $i$ ($\phi_i$ is learned by DDPG and $\pi_i$ is learned by DGN, and they share the Q-network)
			\For{$t = 0$ to $T$}
			\For{each agent $i$}
			\State exchange observations and get $o_i$
			\State $w_{i}^{\text{out}} \gets \phi_i(o_{i})$ with exploration 
			\State exchange $w_{i}^{\text{out}}$ and get $w_{i}^{\text{in}}$ 
			\State $a_{i} \gets \pi_i(o_{i};w_{i}^{\text{in}})$ with exploration 
			\State execute $a_{i}$, obtain $r_{i}$, and transition to $o_{i}^\prime = o_{i,t+1}$
			\State exchange $r_{i}$ and get $r_{i}^{{w}}$ 
			\State store $(o_{i}, w_{i}^{\text{in}}, a_{i}, r_{i}^{{w}}, o_{i}^\prime, \mathcal{N}_{i})$ in $\mathcal{B}_i$ 
			\EndFor
			\If{$t\bmod update\_frequency = 0$}
			\For{each agent $i$}
			\State sample a minibatch from replay buffer $\mathcal{B}_i$: $D=\{(o_i, w_i^{\text{in}}, a_i, r_i^{\boldsymbol{w}}, o_i^\prime, \mathcal{N}_i)\}$ 
			\State exchange $w_i^{\text{out}'} \gets \phi_i^\prime(o_i^\prime)$ and get $w_i^{\text{in}'}$ 
			\State set $y_i \gets r_i^{\boldsymbol{w}} + \gamma q_i^{\pi_i'}(o_i^\prime,a_i^\prime;w_i^{\text{in}^\prime})|_{a_i^\prime = \pi_i'(o_i^\prime;w_i^{\text{in}'})}$ 
			\State update $\mu_i$ by $\nabla_{\mu_i} \mathbb{E}_D (y_i-q_i^{\pi_i}(o_i,a_i;w_i^{\text{in}}))^2$
			\State exchange $w_i^{\text{out}} \gets \phi_i(o_i)$ and get $w_i^{\text{in}}$ 
			\State compute $g_i^{\text{in}}=\nabla_{w_i^{\text{in}}}q_i^{\pi_i}(o_i,\arg\max_{a_i} q_i^{\pi_i};w_i^{\text{in}})$
			\State exchange $g_i^{\text{in}}$ and get gradient $g_i^{\text{out}}$ for $w_i^{\text{out}}$
			\State update $\theta_i$ by $\frac{1}{|D|} \sum_{o_i \in D} (\nabla_{\theta_i} \phi_i(o_i))^{\mathsf{T}} g_i^{\text{out}}$
			\State softly update target networkrs $\theta_i'$ and $\mu_i'$
			\EndFor
			\EndIf
			\EndFor
		\end{algorithmic}
	\end{algorithm}

	\section{Discussions on Training LToS}
	\label{discussion}
	As a hierarchically decentralized MARL framework, LToS brings some challenges for training. 
	
	\textsf{Selfishness Initializer.} On the basis of a straightforward idea that one should generally focus more on its own reward than that of others when optimizing its own policy, the initial output of each high-level policy network is supposed to be higher on the sharing weight of its own than others. We choose to predetermine the initial selfishness to learn the high-level policy effectively. However, with normal initializers, the output of the high-level policy network will be evenly distributed initially. Therefore, we use a special \textit{selfishness initializer} for each high-level policy network instead. As we use the softmax to produce the weights, which guarantees the constraint: $\sum_{j\in \mathcal{N}_i} w_{ij} = 1, \forall i \in \mathcal{V}$, we specially set the bias of the last fully-connected layer so that each decentralized high-level policy network tends to keep for itself the same reward proportion as the given selfishness initially. The rest of reward is still evenly distributed among neighbors. LToS learns started from such initial weights, while \textit{fixed} LToS uses such weights throughout each experiment. Moreover, we use grid search to find the best selfishness for \textit{fixed} LToS in \textit{traffic} and \textit{jungle}. For \textit{prisoner} we deliberately set the selfishness to $0.5$ so that \textit{fixed} LToS directly optimizes the average return.
	
	\textsf{Unified Pseudo Random Number Generator.} LToS is learned in a decentralized manner. This incurs some difficulty for experience replay. As each agent $i$ needs $w_i^{\text{in}}$ to update network weights for both high-level and low-level policies, it should sample from its buffer a batch of experiences where each sampled experience should be synchronized across the batches of all agents (\textit{i.e.}, the experiences should be collected at a same timestep). To handle this, all agents can simply use a unified pseudo random number generator and the same random seed. 
	
	\textsf{Different Time Scales.} As many hierarchical RL methods do, we can set the high-level policy to running at a slower time scale than the low-level one. Proposition 1 
	still holds if we expand $v_i^{\boldsymbol{\pi}}$ for more than one step forward. Assuming the high-level policy runs every $M$ timesteps, we can fix $w_i^{\text{out}} = w^{\text{out},t+1} = \cdots = w^{\text{out},t+M-1}$. $M$ is referred to as \textit{action interval} in Table~\ref{tab:LToS}.
	
	\textsf{Infrequent Parameter Update with Small Learning Rate.} Based on the continuity of $\boldsymbol{w}$, a small modification of $\boldsymbol{\phi}$ means a slight modification of local reward functions, and will intuitively result in an equally slight modification of the low-level value functions. This guarantees the low-level policies are highly reusable.
	
	
	\textsf{Unordered Output.} Essentially, the output of high-level policy network is unordered and has a one-to-one match with each neighbor as input. The output of deep neural network, however, is generally ordered and has trouble in varying with the input order. To settle this, we take advantage of DGN which is insensitive of neighbor order as input. Besides, we modify the structure to make the output keep consistency with the neighbor part of input in the relative order.
	
	\section{Hyperparamaters}
	\label{sec:hyper}
	
	As three experimental scenarios are quite different, we may use different hyperparameters. Note that we also tuned the hyparameters for the baselines with grid search. 
	Table~\ref{tab:Hyperparameters} summarizes the hyperparameters of DQN, DGN that also serves as the low-level network of LToS, and IP. We choose the setting of original DGN in \textit{jungle} while the setting of \citet{wei2019colight} in \textit{traffic} for consistency. Table~\ref{tab:LToS} summarizes the hyperparameters of the high-level network of LToS, which are different from the low-level network. Table~\ref{tab:A2C} summarizes the hyperparameters of NeurComm and ConseNet, which adhere to the implementation \citep{chu2020multi}. In addition, for tabular Coco-Q, the step-size parameter is $0.5$, and for IP, the regularizer factor is $0.2$. We adopt soft update for both high-level and low-level networks and use an Ornstein-Uhlenbeck Process (abbreviated as OU) for high-level exploration. 
	
	Both \textit{fixed} LToS and NeurComm exploit static reward shaping, but they adopt different reward shaping schemes which are hard to compare directly. We consider a simple indicator: Self Neighbor Ratio (SNR), the ratio of reward proportion that an agent chooses to keep for itself to that it obtains from a single neighbor. As the rest reward is evenly shared with neighbors in LToS, for each agent $i$, we have $\text{SNR} = \nicefrac{\text{selfishness}}{\text{1-selfishness}} \times (|\mathcal{N}_i|-1)$ for LToS, and $\text{SNR} = \nicefrac{1}{\alpha}$ for NeurComm where $\alpha$ is the spatial discount factor. We adjust the initial selfishness and $\alpha$ to set the SNR of both methods at the same level for fair comparison.

	\begin{table*}[!t]
		\centering
		\setlength{\abovecaptionskip}{3pt}
		\renewcommand{\arraystretch}{1}
		\caption{Hyperparameters for DQN, DGN (also serves as the low-level policy network of LToS), and IP}
		\label{tab:Hyperparameters}
		\footnotesize
		\begin{tabular}{c c c c c}
			\toprule   
			Hyperparamater & \textit{Prisoner} & \textit{Jungle} & \textit{Traffic-$6\times6$} & 
			\textit{Traffic-Shenzhen} \\
			\midrule
			sample size & 10 & 10 & 1,000 & 1,000 \\
			batch size & 10 & 10 & 20 & 20 \\
			buffer capacity & 200,000 & 200,000 & 10,000 & 10,000 \\
			$\epsilon_{start},\epsilon_{decay},\epsilon_{end}$ & 0.8/1/0.8 & 0.6/0.996/0.01 & 0.4/0.9/0.05 & 0.4/0.9/0.05 \\
			\midrule
			initializer & random normal & random normal & random normal & random normal \\
			optimizer & Adam & Adam & RMSProp & RMSProp \\
			learning rate & 1e-3 & 1e-4 & 1e-3 & 1e-3 \\
			$\gamma$ & 0.99 & 0.96 & 0.8 & 0.8 \\
			$\tau$ for soft update & 0.1 & 0.01 & 0.1 & 0.1 \\
			\midrule
			\# MLP units & 32 \& 32 & 512 \& 128 & 32 \& 32 & 32 \& 32 \\
			MLP activation & ReLU & ReLU & ReLU & ReLU \\
			\# encoder MLP layers & 2 & 2 & 2 & 2 \\
			\# attention heads for DGN & 4 & 4 & 1 & 1 \\ 
			\bottomrule  
		\end{tabular}
	\end{table*}
	
	\begin{table*}[!t]
		\centering
		\setlength{\abovecaptionskip}{3pt}
		\renewcommand{\arraystretch}{1}
		\caption{Hyperparameters for the high-level policy network of LToS}
		\label{tab:LToS}
		\footnotesize
		\begin{tabular}{c c c c c}
			\toprule   
			Hyperparamater & \textit{Prisoner} & \textit{Jungle} & \textit{Traffic-$6\times6$} & 
			\textit{Traffic-Shenzhen} \\
			\midrule
            update frequency & 1 step & 100 episodes & 20 episodes & 50 episodes \\
			action interval & 1 step & 1 step & 15 steps & 15 steps \\
			sample size & 2,000 & 5,000 & 1,000 & 1,000 \\
			batch size & 32 & 32 & 20 & 20 \\
			noise for exploration & $\epsilon$ + Gaussian & OU & OU & OU \\
			noise parameter & $\epsilon=0.8$, $\sigma=1$ & $\sigma=0.025\epsilon$ & $\sigma=0.25\epsilon$ & $\sigma=0.25\epsilon$ \\
			\midrule
			initializer & selfishness & selfishness & selfishness & selfishness \\
			initial selfishness & 0.5 & 0.5 & 0.8 & 0.9 \\
			optimizer & SGD & SGD & SGD & SGD \\
			learning rate & 1e-1 & 1e-4 & 1e-3 & 1e-3 \\
			last MLP layer activation & softmax & softmax & softmax & softmax \\
			\bottomrule  
		\end{tabular}
	\end{table*}
	
	\begin{table*}[!t]
		\centering
		\setlength{\abovecaptionskip}{3pt}
		\renewcommand{\arraystretch}{1}
		\caption{Hyperparameters for NeurComm, ConseNet, LIO and QMIX}
		\label{tab:A2C}
		\footnotesize
		\begin{tabular}{c c c c c}
			\toprule   
			Hyperparamater & \textit{Prisoner} & \textit{Jungle} & \textit{Traffic-$6\times6$} & 
			\textit{Traffic-Shenzhen} \\
			\midrule
			initializer & orthogonal & orthogonal & orthogonal & orthogonal \\
			optimizer & RMSProp & RMSProp & RMSProp & RMSProp \\
			learning rate & 5e-3 & 5e-5 & 5e-4 & 5e-4 \\
			\midrule
			\# MLP units & 20 & 512 \& 128 & 16 & 16 \\
			MLP activation & ReLU & ReLU & ReLU & ReLU \\
			\# cell state units & 20 & 512 & 16 & 16 \\
			\# hidden state units & 20 & 512 & 16 & 16 \\
			\midrule 
			RNN type for NeurComm and ConseNet & LSTM & LSTM & LSTM & LSTM \\
			RNN type for QMIX & GRU & GRU & GRU & GRU \\
			hypernetwork layer1 units for QMIX & $2\times20$ & $20\times512$ & $36\times16$ & $36\times16$ \\
			hypernetwork layer2 units for QMIX & $20$ & $512$ & $16$ & $16$ \\
			$\alpha$ for NeurComm & 1 & 0.33 & 0.1 & 0.1 \\
			\midrule 
			$\epsilon_{start},\epsilon_{decay},\epsilon_{end}$ for LIO & 0.8/0.99/0.01 &
			0.6/0.996/0.01 & 0.2/0.9/0.01 & 0.2/0.9/0.01 \\
			$\alpha_{\theta}$ for LIO & 1 & 1e-4 & 1e-4 & 1e-4 \\
			$R_{max}$ for LIO & 2 & 3 & 0.1 & 0.1 \\
			\bottomrule  
		\end{tabular}
	\end{table*}
	
\end{document}